\renewcommand{\cite}[1]{\citep{#1}}
\let\algoAND\AND
\let\AND\classAND
\begin{document}
\title{Efficiently Learning Branching Networks for Multitask\\Algorithmic Reasoning}

\author[$\dag$]{Dongyue Li}
\author[$\dag$]{Zhenshuo Zhang}
\author[$\dag$]{Minxuan Duan}
\author[$\ddag$]{\\Edgar Dobriban}
\author[$\dag$]{Hongyang R. Zhang}
\affil[$\dag$]{Northeastern University, Boston, Massachusetts}
\affil[$\ddag$]{University of Pennsylvania, Philadelphia, Pennsylvania}
\maketitle
{\renewcommand{\thefootnote}{}
\footnotetext[0]{This is the full preprint of a paper that will appear in KDD 2026. Email correspondence: \texttt{\{li.dongyu, zhang.zhens, duan.mi, ho.zhang\}@northeastern.edu} and \texttt{dobriban@wharton.upenn.edu}.}}
\begin{abstract}
Algorithmic reasoning---the ability to perform step-by-step logical inference---is a synthetic benchmark for evaluating multi-step reasoning abilities, designed for graph neural networks and also for transformer models. Prior work has evaluated reasoning for executing a single algorithmic task, whereas a more desirable objective is to perform multiple algorithmic reasoning tasks simultaneously. We start by noting that this is inherently difficult due to differences arising from the execution traces of the algorithms (such as depth- vs. breadth-first search), which cause interference when they are trained together.

In this paper, we introduce {branching neural networks}, a new architecture for multitask algorithmic reasoning. The main idea is to search for a recursive tree-structured partition of $n$ algorithmic tasks into a $k$-ary tree (divided into $L$ layers). Naive search requires $O(k^{nL})$ complexity; we develop an algorithm that reduces this to $O(nL)$ by solving a convex relaxation at each layer to approximate an optimal partition. Our approach clusters these tasks using gradient-based affinity and can be used on top of any base model.

We validate our approach on algorithmic reasoning benchmarks and their extensions with text descriptions. We show that gradient-based affinity scores help estimate true performance with less than {5\%} error, measured across eight different architectures with up to 34 billion parameters. On the CLRS benchmark, our approach outperforms existing graph neural networks by {3.7}\% and baselines by {1.2}\%, while reducing runtime by {48}\% and memory usage by {26}\%. The learned branching structure shows a hierarchical clustering of related algorithms. On three text-based graph reasoning benchmarks, our approach improves over baseline methods by {3.2}\%. Finally, on a graph dataset with 21 million edges and 500 community labelings, we show a {28}\% accuracy gain over existing multitask and branching architectures, along with a {4.5$\times$} reduction in runtime.
\end{abstract}
\section{Introduction}

Reasoning is tied to a learning system's ability to make inductive inferences from its internal states, and it remains a central challenge in artificial intelligence \cite{feigenbaum1963computers}.
Many formalisms---most notably probabilistic reasoning via Bayesian networks---have been developed to build models that can reason efficiently under uncertainty \cite{russell1995modern}.
More recently, a complementary line of work has sought to characterize the reasoning ability of neural networks through the lens of algorithmic execution, viewing reasoning as the step-by-step computation performed by a classical algorithm.
Surprisingly, for a wide range of basic graph algorithms such as shortest paths and reachability, neural networks can be trained to accurately predict not only the final output of the algorithm but also its intermediate states  \cite{velivckovic2022clrs}.
This raises a natural next question: can we design a model capable of solving multiple algorithmic reasoning tasks simultaneously?
Since the number of possible combinations among $n$ tasks grows combinatorially, the central challenge is to develop architectures and training methods that enable efficient and scalable {multitask algorithmic reasoning}.

A rigorous study of algorithmic reasoning is valuable not only from a foundational perspective \cite{velivckovic2021neural}, but also for its potential to improve the reasoning abilities of large language models (LLMs), especially on tasks involving graphs and structured computation \cite{wang2023can}.
In our experiments, we observe that prompting open-source LLMs (e.g., Llama-3-8B and Qwen-3-1.7B) with textual descriptions of algorithmic tasks, such as shortest paths, yields performance roughly 65\% worse than directly training a graph neural network on the same task.

\begin{figure*}[t!]
    \centering
    \begin{minipage}[b]{0.995\textwidth}
        \centering
        \includegraphics[width=0.995\textwidth]{./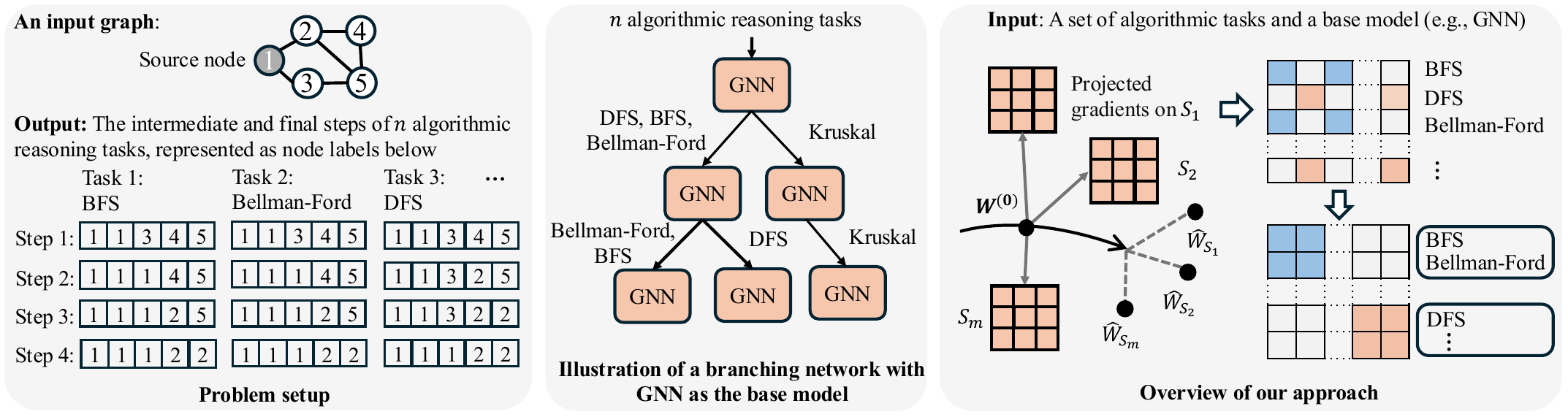}
    \end{minipage}
    \caption{An illustration of our problem setup and the proposed solution. 
    \textbf{Left:} We study learning graph algorithms such as BFS, Bellman-Ford, and DFS, formulating the prediction of intermediate algorithmic states as a node-labeling classification problem.
    \textbf{Center:} Given $n$ algorithmic reasoning tasks and a base model (e.g., GNNs or low-rank adapters), we develop an efficient method to construct a branching network that automatically learns parameter-sharing structures across all tasks.
    \textbf{Right:} Our algorithm identifies a task partition at each layer and then searches for a corresponding tree structure over layers $1, 2, \dots, L$.
    Overall, the procedure runs in time $O(nL)$---dramatically faster than the naive worst-case of $O(2^{nL})$.}\label{fig_pipeline}
\end{figure*}

Several existing works have sought to design neural networks for tackling a single algorithmic reasoning task, while the problem of multitask algorithmic reasoning has not been studied in depth.
\citet{velivckovic2022clrs} introduce CLRS-30, a dataset of 30 algorithmic tasks, and find that message-passing neural networks can learn to accurately predict both the intermediate and final steps of a graph algorithm (where a graph is sampled from a fixed distribution).
In particular, each intermediate step is treated as a node labeling sub-task, and the loss objective sums over all the intermediate node labeling sub-tasks (cf. equation \eqref{eq_loss}, Section \ref{sec_prelim} for the definition).
Consider breadth-first search (BFS) as an example (See also Figure \ref{fig_pipeline} for an illustration).
Starting from a graph and a source node for the search, at each intermediate step a network is trained to predict the index of the predecessor node on the current traversal path from the source.
\citet{ibarz2022generalist} conduct multitask experiments by training a single network across all CLRS-30 tasks, noting both positive and negative transfer compared to single-task training.
\citet{muller2024towards} design a tri-attention mechanism in graph transformers, and use this new architecture to improve algorithmic reasoning over existing graph neural networks on a single task.
Both works leave open the question of designing specialized optimizers for multitask algorithmic reasoning.

A key insight of this paper is that training a single neural network for multitask algorithmic tasks is inherently difficult due to complex interference between the intermediate steps (or node labels) of different algorithms.
For example, in Figure \ref{fig_pipeline}, on a toy graph example, we notice that BFS and Bellman-Ford share the same intermediate node labels, while depth-first search (DFS) differs in Steps 2 and 3.
However, if we use a single processor to predict all three tasks, this interference is unavoidable (See our study in Figure \ref{fig_graph_sc}, Section \ref{sec_sc}). In contrast, training a separate network for each task requires storing $n$ models. This increases the memory usage at inference by a factor of $n$, since $n$ networks must be evaluated.  

To address this challenge, we explore the design of branching networks for multitask algorithmic reasoning by dividing algorithms into branches that account for their similarities.
Searching over an $L$-layer, $k$-ary branching network for $n$ tasks takes $O(k^{nL})$ time in the worst case.
We introduce an efficient algorithm whose runtime is only $O(nL)$. Further, this algorithm can be applied on top of any base model, including GNNs or LLMs with low-rank adapters.
A key technical ingredient of this algorithm is to recursively estimate the affinity scores among several algorithms, conditioned on the partitioning from the previous layers.
In more detail, our algorithm involves two steps:
\begin{itemize}%
    \item First, we design an algorithm that, given a set of algorithmic reasoning tasks, partitions them into (at most) $k$ groups via convex optimization. 
    We quantify similarity scores using advances in the data attribution and influence functions literature \cite{ilyas2022datamodels,park2023trak}.
    Crucially, we develop a new notion of layer-wise affinity scores, conditioned on the partitioning from previous layers.
    The main intuition of this procedure is that we can approximate the loss of a well-trained neural network via first-order approximations, a key geometric property of over-parameterized networks \cite{jacot2018neural}.
    The runtime involves computing the gradients and functional values for the training samples of all tasks once at a pretrained initialization, leading to $O(n)$ runtime; additional computations, such as running the clustering algorithms, incur negligible cost.
    
    \item Second, we search for a branching network recursively from the first layer until layer $L$. Taken together, the overall running time of the algorithm is $O(nL)$.
    The benefit of this branching network is that the memory overhead over multitask training \cite{ibarz2022generalist} is only $\frac k L$ (typically $\le 2$), making it amenable to larger network architectures such as edge transformers \cite{muller2024towards}.
\end{itemize}
See Figure \ref{fig_pipeline} for a high-level overview of our approach.

We evaluate our approach on various algorithmic reasoning tasks across graph and text datasets. 
First, we find that the gradient-based approximation can estimate the true performance with an error of less than \textbf{5}\% for four GNNs and four LLMs with up to 34 billion parameters.
Second, we apply \acronym{} on the CLRS benchmark \cite{velivckovic2022clrs} using GNNs as the base model and observe that \acronym{} outperforms the state-of-the-art single multitask network \cite{muller2024towards} by {3.7}\% on average across twelve graph algorithmic reasoning tasks. 
\acronym{} improves over the most recent branching networks \cite{guo2020learning} and multitask networks \cite{li2024scalable} by {1.2\%} on average, while reducing GPU hours by \textbf{48\%} and GPU memory by \textbf{26\%}.
In fine-tuning language models on three text-based graph reasoning benchmarks (including CLRS-Text \cite{markeeva2024clrs}, GraphQA \cite{fatemitalk}, and GraphWiz \cite{chen2024graphwiz}), \acronym{} outperforms the strongest multitask learning baseline \cite{li2024scalable} by {3.2}\% on average. 
Additionally, we show that \acronym{} extends beyond algorithmic reasoning, applying it to overlapping community detection (on a graph with over 21M edges and 500 node-labeling tasks) \cite{whang2016overlapping}.
\acronym{} improves accuracy by {28\%} over branching and multitask networks \cite{guo2020learning,li2024scalable}, while achieving the best trade-off in runtime and memory.

\paragraph{Summary of contributions.}
This paper makes three contributions to the problem of algorithmic reasoning on graphs:
\begin{itemize}
    \item First, we develop a fast approximation procedure to efficiently identify optimal parameter-sharing paradigms using a branching network, and we empirically validate that it accurately approximates true performance across various models.
    \item Second, we design an automatic branching network that improves average performance across multiple algorithmic reasoning tasks and can be built on any base model. 
    \item Third, we evaluate our algorithm on graph- and text-based algorithmic reasoning tasks and show that it achieves competitive performance with the best trade-off between runtime and model memory.
\end{itemize}
Our work reinforces the importance of several open problems relating to algorithmic reasoning, in particular the learnability of an algorithm via neural networks \cite{velivckovic2021neural,luca2024simulation}, and generalization in algorithmic reasoning \cite{georgiev2023beyond}.
It would be interesting to use the new techniques developed in this paper to better understand the sample complexity (and in-context learnability) of algorithmic reasoning tasks.
The code for reproducing our findings can be found at \url{https://github.com/VirtuosoResearch/Multitask-algorithmic-reasoning}.

\section{Preliminaries}\label{sec_prelim}

We follow the definition of algorithmic reasoning from prior work \cite{velivckovic2021neural,velivckovic2022clrs}.
Let ${A^{(1)}, A^{(2)}, \ldots, A^{(n)}}$ denote $n$ algorithms.
Our goal is to learn a neural network $f_W$ (such as a GNN) parameterized by $W$ to minimize the risk averaged over the $n$ algorithmic execution sequences.
Given an input $X$, the encoding of the $j$-th intermediate step of $A^{(i)}$ on input $X$ is denoted as $A_j^{(i)}(X)$, for $j  = 1, 2, \dots, S^{(i)}(X)$,
where $S^{(i)}(X)$ denotes the total number of execution steps by running $A^{(i)}$.

Given an input $X$, an algorithmic reasoning task involves predicting $A_j^{(i)}(X)$ as accurately as possible, for all $j = 1, \dots, S^{(i)}(X)$.
Then, let $\ell$ denote a loss function (e.g., cross-entropy) over predictions $f^{(i)}_W(X; j)$ of $A^{(i)}$ at step $j$ and ground truth labels $A_j^{(i)}(X)$.
The test loss $L(f_W)$ of $f_W$ is defined as:
\begin{align}
 \exarg{X \sim \cD}{\frac{1}{n}\sum_{i=1}^n \frac{1}{S^{(i)}(X)} \sum_{j=1}^{S^{(i)}(X)  } \ell \left( f_W^{(i)}(X; j), A_j^{(i)}(X) \right)},\label{eq_loss}
\end{align}
where $\cD$ denotes a fixed distribution from which $X$ is drawn.
For example, in the case of graphs, the input $X$ corresponds to a graph, and the labels are represented as node labels. $\cD$ can be Erd\"os-R\'enyi random graphs or preferential attachment graphs.
For each algorithm, the node labels are generated by running the algorithm on an input graph. Then, the node traversal trajectories at each step are recorded as labels. 

To help understand the above definition, consider the example illustrated in Figure \ref{fig_task_examples} on a toy graph with five nodes, for three algorithms (breadth-first search, depth-first search, and Bellman-Ford).
To understand the encoding, each intermediate step is labeled by the predecessor of each node along the traversal path, yielding a node-label vector for each step. The node labels are initially the indices of the nodes. 
\begin{itemize}%
    \item In BFS, starting from node $1$, the first step visits node $2$.
    Hence, we update node $2$'s label to $1$, its predecessor. 
    Step $2$ visits node $3$, hence its node label is updated as $1$. 
    Step $3$ visits node $4$ (with predecessor $2$) and Step $4$ visits node $5$ (with predecessor $2$).
    \item In DFS, the first step visits node $2$, whose predecessor becomes $1$.
    Step $2$ visits node $4$, whose predecessor is $2$.
    Step $3$ visits node $5$ from $4$. Step $4$ visits node $3$ from $1$.
    \item Notice that in step $2$, BFS and Bellman-Ford follow the same traversal path, but DFS follows a different path.
\end{itemize}

\begin{figure}[!t]
    \centering
    \begin{minipage}[b]{0.58\textwidth}
        \centering
        \includegraphics[width=0.99\textwidth]{./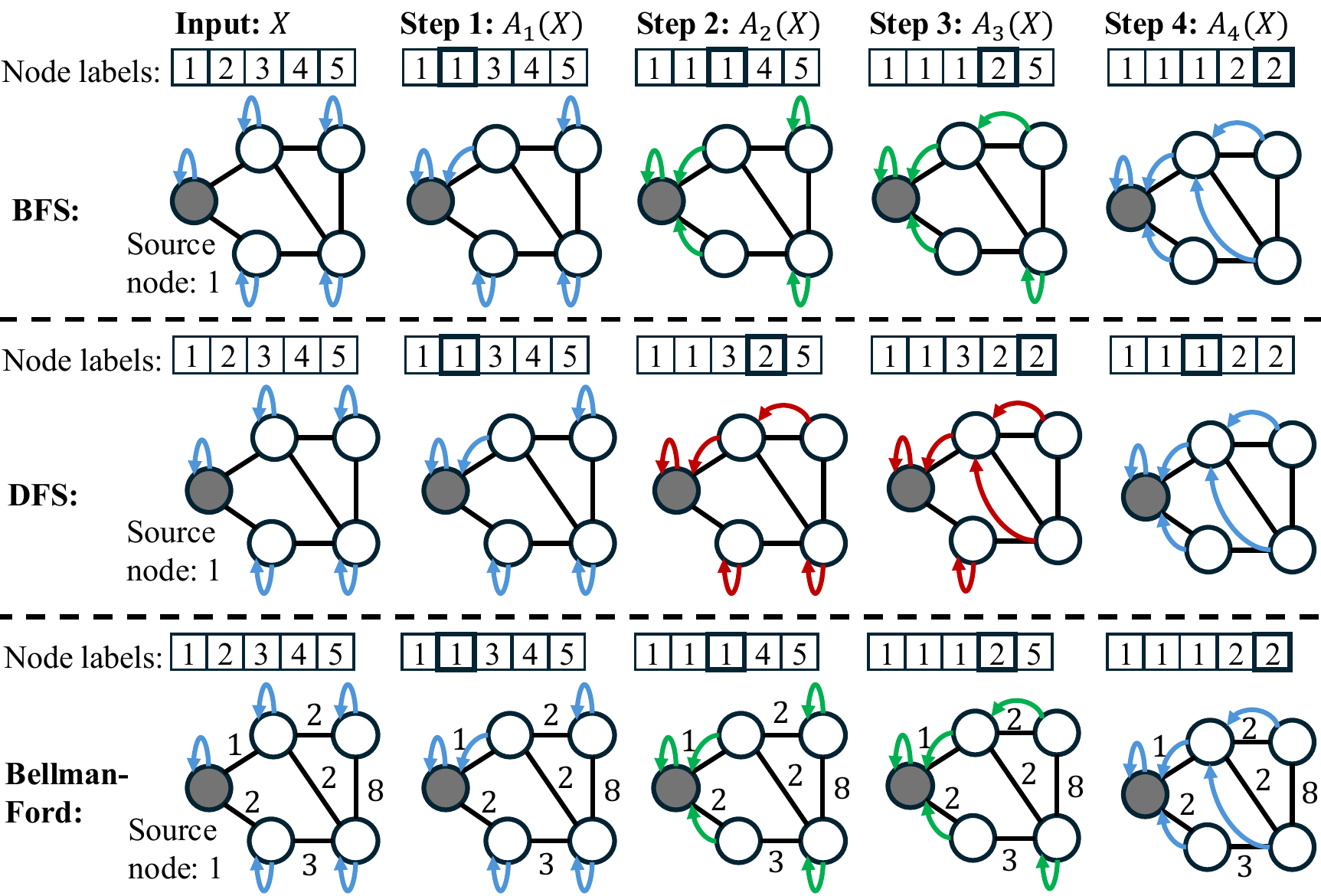}
    \end{minipage}
    \caption{We give three examples of algorithmic reasoning tasks: Breadth-first search (BFS), depth-first search (DFS), and Bellman-Ford.
    The node labels of each intermediate step encode the predecessor of each node along the traversal path. We illustrate the predecessors with arrows, and we use the integers on edges to indicate the edge weights. One can see that these algorithms share some intermediate steps, but not all, and the goal of this paper is to automatically identify such similarities.}\label{fig_task_examples}
\end{figure}

\noindent\textbf{Problem statement.} Given $n$ algorithmic reasoning tasks, can we design a neural network to predict the logical steps of all algorithms simultaneously?
Notice that in the example of Figure \ref{fig_task_examples}, BFS and Bellman-Ford share the same node labels at every step, whereas DFS shares only the first step with BFS. 
More generally, we can expect algorithms that follow a similar logic to share similar node labels in the intermediate steps.
Thus, the goal of \emph{multitask algorithmic reasoning} is to design a neural network $f_W$ that can minimize the test loss of $L(f_W)$ above.

As an extension of the above problem, suppose we are given a textual description of the same algorithmic reasoning task; the LLM's output would be to generate the correct execution steps for each algorithm.

\section{Our Approach}

We now describe a new algorithm that automatically learns one neural network for multitask algorithmic reasoning. 
Our approach involves learning a branching network that can be applied on top of any base model, enabling more flexible parameter sharing based on estimated task similarity scores. 
For example:
\begin{itemize}%
    \item For GNNs, one can instantiate multiple networks per layer and assign a specific GNN to each task at each layer. See Figure \ref{fig_tree_examples} for several examples of branching GNNs.
    \item For LLMs, one can apply parameter-efficient fine-tuning such as low-rank adapters (LoRA) \cite{hu2021lora}, and design a branching structure of LoRAs on top of a pretrained LLM.
\end{itemize}

Specifically, consider building a network with $L$ layers. 
Training a separate network for each task requires storing $n$ models, resulting in a memory cost of $O(n)$, which can be expensive for storing larger network architectures such as edge transformers \cite{muller2024towards}. 
Memory reduction can be achieved by constructing a branching network that enables parameter sharing across selected layers across tasks.
If each layer has at most $k$ possible branches, then at each layer, there are $O(k^n)$ options for assigning $n$ tasks into $k$ branches.
Iterating over layer $1$ to $L$ leads to $O(k^{nL})$.
By contrast, our approach reduces the runtime to $O(nL)$ by using a top-down search to recursively partition tasks from the first to the last layer.
Each search step, which runs in $O(n)$ time, estimates task affinity scores based on the task partition inherited from the previous layer.
We use $W$ to denote all the parameters of the branching network, and let $W_i$ denote the parameters at the $i$-th layer, for $i = 1, \dots, L$.

\begin{figure}[!t]
    \centering
    \begin{minipage}[b]{0.305\textwidth}
        \centering
        \includegraphics[width=0.65\textwidth]{./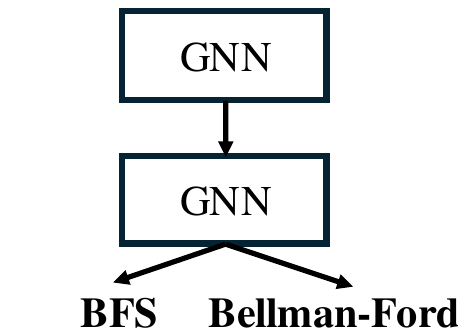}
    \end{minipage}\hfill
    \begin{minipage}[b]{0.305\textwidth}
        \centering
        \includegraphics[width=0.65\textwidth]{./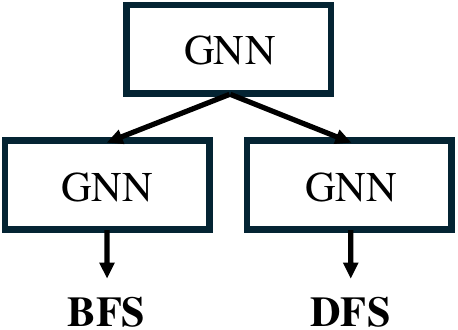}
    \end{minipage}\hfill
    \begin{minipage}[b]{0.305\textwidth}
        \centering
        \includegraphics[width=0.65\textwidth]{./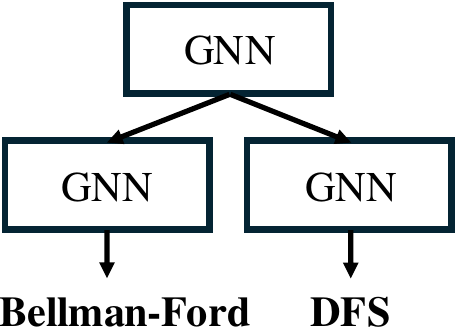}
    \end{minipage}
    \caption{We present three examples of branching GNNs, each designed to learn a pair of algorithms.
    As shown in Figure \ref{fig_task_examples}, all three algorithms share identical node labels in the first step, so the same initial GNN layer applies to all.
    BFS and Bellman–Ford continue to share node labels in steps 2 and 3, thus reusing the second layer, while DFS branches out.
    These structures are learned automatically from data: across the training graphs, BFS and Bellman–Ford share 75\% of their node labels, whereas DFS overlaps with each by 30\%. 
    }\label{fig_tree_examples}
\end{figure}

\subsection{Partitioning at One Layer}

We start by describing a procedure that, given $S\subseteq \set{1, \dots, n}$ at layer $l$, determines a disjoint partition of $S$, corresponding to the branching structure at layer $l+1$. This procedure first estimates task affinity scores using partitions inherited from previous layers $1, 2, \dots, l$, and then maximizes these estimates via a convex relaxation program.

Let $S$ be a set of tasks that contain $i, j$, sampled from $\set{1,2,\dots, n}$.
Let $\hat L^{(i)}(f_{W}(S))$ denote the validation loss of task $i$ of $f_{W}(S)$ (trained on $S$).
Let $S_1, S_2, \dots, S_{m}$ denote $m$ randomly sampled subsets from $\set{1, 2, \dots, n}$.
We define the affinity score between $i, j$, conditioned on layer $l$, as:
\begin{align}\label{eq_layer_wise_task_affinity}
    T^{(i)}_{j, l} \define \frac 1 {m_{i, j}} \sum_{S_k:~ i\in S_k, j \in S_k}{\hat L^{(i)}\big(f_W(S_k)\big)},
\end{align}
where $m_{i, j}$ is the number of subsets $S_k$ where $i, j$ are both in $S_k$.
This score is analogous to the feature importance score used in random forests \cite{liidentification,li2023boosting,li2024scalable_kdd}.

Computing $T$ requires training $f_W$ $m$ times.
Instead, we design an algorithm that estimates $T$ \emph{without repeated training}.
The key idea is to use a first-order approximation of the network output around an initialization $W^{(0)}$, with respect to the weights from layer $l$ up to $L$:
\begin{align}
    f_W^{(i)}(X; j) =~ & f_{W^{(0)}}^{(i)}(X; j) + \Big[ \nabla_{{l : L}}  f_{W^{(0)}}^{(i)}(X; j) \Big]^{\top} \Big(W_{l : L} - W^{(0)}_{l : L} \Big) + \epsilon^{(i)}_{X, j},  \label{eq_approx}
\end{align}
for any step $j = 1, \dots, S^{(i)}(X)$, where we recall that $S^{(i)}(X)$ is the number of execution steps of $A^{(i)}(X)$, and $W_{l:L}$ denotes the weights from layer $l$ up to $L$.
To obtain this initialization, we first train $f_W$ from layer $l$ up to $L$ jointly on all tasks of $S$. As the partition is conditioned on the structures before layer $l$, we keep the weights in the first $l-1$ layers frozen.
This leads to an initialization $f_{W^{(0)}}$.
Our experiments (See Table \ref{table_compare_approximation_error}, Section \ref{sec_approx}) show that this approximation yields less than 5\% relative errors on algorithmic reasoning tasks across four GNNs and four LLMs.
This estimation leverages the fact that a well-trained neural network uses its gradients as features \cite{jacot2018neural}.
Hence, the linear approximation is accurate around a local vicinity of the initialization \cite{li2024scalable_kdd}.

\begin{algorithm}[t!]
    \caption{\fastapprox{} (Fast Approximate Partitioning)}\label{alg_layerwise_tag}
    \raggedright
    \textbf{Input}: Training/validation data for a subset of tasks $S$; Layer $l$ \\
    \textbf{Require:} Network parameters $W$; Number of subsets $m$ and their sizes $\alpha$; Projection dimension $d$; Number of groups $g$ \\ %
    \textbf{Output}: A disjoint partition of $S$, a trained initialization $f_{W^{(0)}}$\\
    \begin{algorithmic}[1]
        \State $f_{W^{(0)}} \leftarrow$ A meta-initialization trained from $W$ on $S$, with layers $1, 2, \dots, l-1$ fixed
        \State $S_1, S_2, \ldots, S_m \leftarrow$ $m$ random subsets of $S$ with size $\alpha$
        \State $W_{l:L} \in \real^p \leftarrow$ Parameters of layers $l$ to $L$
        \State $P \leftarrow$ A $p$ by $d$ isotropic Gaussian random projection matrix
        \For{each training sample $X$, $j = 1, \dots, S^{(i)}(X)$}
            \State $\tilde{g}_j^{(i)} \leftarrow P^{\top} \nabla_{W_{l : L}} f_{W^{(0)}}^{(i)}(X; j)$, $\forall i\in S$ %
        \EndFor
        \For{$k = 1, \dots, m$}
            \State $\hat W_{d} \leftarrow \arg\min \sum_{i \in S_k} \sum_{X} \tilde\ell(f_W^{(i)}(X))$ %
            \State $\hat W_{S_k} \leftarrow W^{(0)} + P \hat W_d$
            \State $\hat L^{(i)}\Big(f_{\hat W_{S_k}}(S_k)\Big) \leftarrow$ Validation loss of $f_{\hat W_{S_k}}$ on $A^{(i)}$, $\forall i \in S_k$
        \EndFor
        \State $T \leftarrow$ $\abs{S}$ by $\abs{S}$ affinity score matrix via equation \eqref{eq_layer_wise_task_affinity}
        \State $S_{1}, S_{2}, \ldots, S_{g} \leftarrow$ A disjoint partition of $S$ via  clustering on $T$ 
        \State Return $(S_{1}, l+1), (S_{2}, l+1), \ldots, (S_{g}, l+1)$; $f_{W^{(0)}}$
    \end{algorithmic}
\end{algorithm}

Building on the above approximation, we propose to estimate $\hat L^{(i)}(f_W(S_k))$, for any $k = 1, \dots, m$, given $W^{(0)}$.
We estimate the model weights trained on a subset of tasks (denoted as $\hat{W}_{S_k}$) using the gradients evaluated at the initialization.  
For simplicity, we illustrate the case of binary classification, and the same logic applies to multi-class and regression problems. %
By applying the first-order approximation in equation \eqref{eq_approx} to the training loss, we solve a logistic regression problem using the gradients as features, which minimizes the approximated log-loss on the training examples from a subset of tasks: %
\begin{align*}
    \tilde \ell(f_W^{(i)}(X)) = \sum_{j=1}^{S^{(i)}} \log \Big( 1 + \exp \big( - A_j^{(i)}(X) \big[g_j^{(i)}\big]^\top \Big( W_{l:L} - W^{(0)}_{l:L} \Big) 
     -A_j^{(i)}(X) f_{W^{(0)}}^{(i)}(X; j) \big) \Big),
\end{align*}
where $g_j^{(i)} = \nabla_{{l:L}} f_{W^{(0)}}^{(i)}(X; j)$.
In practice, we use random projections of the gradients, following the Johnson-Lindenstrauss lemma.
Solving the above logistic regression problem, we evaluate the model with the estimated model weights $\hat{W}_{S_k}$ to obtain $\hat L^{(i)}(f_W(S_k))$.  

Repeating the estimation to $m$ randomly sampled subsets leads to an approximation of $T^{(i)}_{j, l}$ after averaging over subsets including tasks $i$ and $j$.
The output of this step is an $|S| \times |S|$ task affinity matrix.
Crucially, this step runs in $O(|S|)$ time, which involves training $W^{(0)}$ and evaluating the gradients on the samples in $S$.
Additional costs include running $m$ logistic regressions, which can be performed on CPUs and completed in a few seconds.

Finally, we apply a convex relaxation program to maximize the average affinity score density within clusters.
We adjust the cluster size by regularizing the trace of the assignment variable.
The details are provided in Appendix \ref{app_clustering}.
This clustering step takes less than a few seconds.
In practice, we can apply the clustering step several times to find the optimal cluster size.
In summary, the partitioning procedure at one layer is described in Algorithm \ref{alg_layerwise_tag}.

\subsection{Learning Branching Structures}

Next, we search for a branching network via a top-down procedure.
The algorithm begins with a single network with one module per layer.
Starting at $l=1$, suppose that tasks are grouped into $k_1$ clusters. This creates $k_1$ modules at layer $1$. If $k_1 = 2$, tasks are split into two groups, denoted as $S_{1}$ and $S_{2}$. The procedure continues recursively: Each group is further split at the next layer.
If both are split into two groups, the second layer then contains four modules, denoted as $S_{3}$, $S_{4}$, $S_{5}$, and $S_{6}$.
This continues until layer $L$.
The full procedure is in Algorithm \ref{alg_building_branching_network}.

In terms of running time, at each layer the algorithm takes $O(n)$ time to find a partition, since the union of the sets is at most $n$. In total, Algorithm \ref{alg_building_branching_network} takes $O(nL)$ time.
Regarding memory usage, 
suppose the last layer contains $k$ clusters, and at each layer, the number of clusters grows by a constant factor $a$.
Then the total number of nodes in the tree is roughly $\sum_{i=0}^{L-1} a^i = \frac{k^{L/(L-1)} - 1} {k^{1/(L-1)} - 1}$, where $a^{L-1} \approx k$.

Let the running time for training a single $L$-layer network on a single task be $T$, and the memory usage be $B$. In summary, our algorithm runs in time $nLT$ and uses approximately $\frac{kB} L$ memory to store the network.
For comparison, training a separate network for each task (i.e., single-task learning) requires $nT$ time and $nB$ memory to store $n$ models.
A mixture-of-experts architecture with $k$ networks requires $knT$ time and $kB$ memory.
Task affinity grouping \cite{fifty2021efficiently} with fully-computed affinity scores takes $n^2T$ time and $kB$ memory.
LearningToBranch \cite{guo2020learning} takes $k^LnT$ time and uses $nB$ memory since it trains with $n$ modules per layer.
This comparison is summarized in Table \ref{tab_related_works_comparison}.

\begin{algorithm}[t!]
    \caption{\acronym~(Automated BRAnching NEtworks)}\label{alg_building_branching_network}
    \raggedright
    \textbf{Input}: Training and validation datasets for $n$ algorithmic tasks\\
    \textbf{Output}: A branching neural network $f_W$\\
    \begin{algorithmic}[1]
    \State $f_W\leftarrow$ An $L$-layer network with initialized parameters $W$, with one model at each layer
    \State $Q \leftarrow \set{(\set{1, 2, \dots, n}, {1}) }$ %
    \While{$Q\neq\varnothing$}
        \State $S~\leftarrow$ Dequeue one subset from $Q$ with layer index $l$ 
        \If{$l < L$}
            \State $(S_{1}, l+1), \dots, (S_{k}, l+1)$; $f_{W^{(0)}}~\leftarrow$ \fastapprox{}\big($S, l; W$\big)
            \State ${W}_l \leftarrow {W}_l \cup \Big\{{W^{(0)}_l}\Big\}$
            \Comment{Update the branching network at layer $l$ with weights trained on $S$}
            \State $Q\leftarrow Q \cup \big \{(S_{1}, l+1), \dots, (S_{k}, l+1)\big \}$
        \EndIf
    \EndWhile
    \State \textbf{Return} $f_W$
    \end{algorithmic}
\end{algorithm}

\begin{remark}
    Our approach does not assume that tasks exhibit hierarchical dependencies. The branching design applies when tasks exhibit various levels of similarity, where more similar tasks share more
    layers, and dissimilar tasks share fewer. The procedure can, in principle, handle any type of task dependencies.

    Additionally, this approach can be extended to incorporate new tasks into the branching network. One way is to determine the group partition of the new tasks, layer by layer, by incrementally fine-tuning each layer on the new tasks and estimating its task affinities. Our approximation procedure can also be extended to handle this extension.
\end{remark}

\begin{remark}
For LLMs, our approach constructs a single branching structure of LoRA adapters on top of each model, which can be computed efficiently using a fast approximation algorithm. Note that the approach can be applied with other parameter-efficient fine-tuning methods besides LoRA to construct the adapters. 
\end{remark}

\begin{table}[t!]
\centering
\caption{Summary of runtime and memory cost of \acronym{} and existing multitask and branching networks. Here $n$ is the number of tasks, $L$ is the number of layers, and $k$ is the number of branches at the last layer. $T$ denotes the runtime of training a single network of $L$ layers for one model. $B$ denotes the memory usage of one base model.}\label{tab_related_works_comparison}
{
\begin{tabular}{@{}l c c c c @{}}
\toprule
\textbf{Methods} &  \textbf{Runtime} & \textbf{Memory} \\ \midrule 
Single task learning (STN) & $nT$ & $nB$ \\ 
Multitask learning (MTN) & $nT$ & $B$ \\
Multi-gate MoE (MMoE) \cite{ma2018modeling,hazimeh2021dselect} & $knT$ & $kB$ \\
Task affinity grouping (TAG) \cite{fifty2021efficiently,li2023boosting} & $n^2T$ & $kB$ \\
LearningToBranch \cite{guo2020learning} & $k^L n T$ & $nB$ \\
\acronym{} (Algorithm \ref{alg_building_branching_network}) & $\bm{nLT}$ & $\bm{\frac{k}{L}B}$ \\
\bottomrule
\end{tabular}
}
\end{table}

\subsection{Theoretical Analysis}

In this section, we show that the error introduced by the approximation in \fastapprox{} is bounded. 
We will show that the training loss of the estimated parameters $\hat{W}_{S}$ by solving the logistic regression is bounded. 

\begin{proposition}\label{prop_error_bound}
Let $\cD$ be a search space of model weights $W$ whose radius is at most $D$.
Suppose the gradient of $f_{W^{(0)}}$ at the initialization $W^{(0)}$ in the training set is at most $G$ in Euclidean norm.
Let $T$ be the training set of inputs. For each algorithmic reasoning task $i \in \set{1, 2, \dots, n}$, the training data is collected by executing the algorithm on each input. Suppose that for every task $i$, the average error of the linear approximation is bounded as follows:
\begin{align*}
    \frac{1}{\abs{T}} \sum_{X \in T} \frac{1}{S^{(i)}(X)} \sum_{j=1}^{S^{(i)}(X)} \epsilon_{X, j}^{(i)} \le \delta,
\end{align*}
where
\begin{align*}
    \epsilon_{X, j}^{(i)} = \Bigg\lvert f_W^{(i)}(X; j) - f_{W^{(0)}}^{(i)}(X; j) -  \Big[ \nabla  f_{W^{(0)}}^{(i)}(X; j) \Big]^{\top} \Big(W - W^{(0)} \Big) \Bigg \rvert. 
\end{align*}
Let $\hat{L}_S(f_W)$ be the training loss of a subset of tasks $S \subseteq \set{1,2,\dots,n}$ defined as:
\begin{align*}
    \hat{L}_S(W) = \frac{1}{\abs{T}\abs{S}}\sum_{X \in T} \sum_{i\in S} \frac{1}{S^{(i)}(X)} \sum_{j=1}^{S^{(i)}(X)  } \ell \left( f_W^{(i)}(X; j), A_j^{(i)}(X) \right).
\end{align*}
Provided that the projection dimension $d = O\Big(\frac{\log p}{\epsilon^2}\Big)$, the training loss of $\hat W_S$ is bounded away from the minimum training loss for any $S$ as
\begin{align}\label{eq_error_bound}
    \hat L_S(\hat W_S) \le \min_{W\in\cD} \hat L_S(W) + 2\delta +  {2 G D } {\epsilon}.
\end{align}
\end{proposition}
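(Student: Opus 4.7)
The plan is to introduce two intermediate loss functionals, the linearized loss and its random-projection version, and then chain three error terms: one from the linearization assumption, one from Johnson--Lindenstrauss (JL), and one from the optimality of $\hat W_S$ inside the projected problem. Concretely, define
\begin{align*}
\tilde L_S(W) \;=\; \frac{1}{|T||S|}\sum_{X,i,j}\frac{1}{S^{(i)}(X)}\,\ell\!\left( f_{W^{(0)}}^{(i)}(X;j) + \bigl[\nabla f_{W^{(0)}}^{(i)}(X;j)\bigr]^{\top}(W - W^{(0)}),\; A_j^{(i)}(X)\right),
\end{align*}
and the projected surrogate $\tilde L_S^{P}(W_d)$ obtained by replacing the gradient $g$ and the increment $W-W^{(0)}$ with $P^{\top}g$ and $W_d\in\mathbb{R}^d$ respectively. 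The minimizer of $\tilde L_S^{P}$ is $\hat W_d$, and by construction $\hat W_S = W^{(0)} + P\hat W_d$.

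First I would use the Lipschitz property of the logistic loss (1-Lipschitz in its first argument) together with the pointwise approximation assumption to get the uniform bound
\begin{align*}
\bigl| \hat L_S(W) - \tilde L_S(W) \bigr| \;\le\; \delta \qquad \text{for every } W\in\cD.
\end{align*}
Next, I would invoke the JL lemma with projection dimension $d = O(\epsilon^{-2}\log p)$: with high probability, the projection $P$ preserves inner products between any fixed gradient $g$ (with $\|g\|\le G$) and any direction $\Delta = W - W^{(0)}$ with $\|\Delta\|\le D$, in the sense $|\langle P^{\top}g, P^{\top}\Delta\rangle - \langle g,\Delta\rangle| \le \epsilon G D$. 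Combined again with the 1-Lipschitz loss, this yields
\begin{align*}
\bigl|\tilde L_S(W) - \tilde L_S^{P}\!\bigl(P^{\top}(W-W^{(0)})\bigr)\bigr| \;\le\; GD\epsilon \qquad \text{for every } W\in\cD.
\end{align*}
Note that when $W-W^{(0)}$ already lies in the column span of $P$, the identity $\langle g, PW_d\rangle = \langle P^{\top}g, W_d\rangle$ makes $\tilde L_S(\hat W_S) = \tilde L_S^{P}(\hat W_d)$ exactly, which is the key observation that avoids paying a JL penalty on the side of $\hat W_S$.

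The final step is the chain. Let $W^{\star} \in \arg\min_{W\in\cD}\hat L_S(W)$ and let $W_d^{\star} = P^{\top}(W^{\star}-W^{(0)})$. Then
\begin{align*}
\hat L_S(\hat W_S)
&\;\le\; \tilde L_S(\hat W_S) + \delta \;=\; \tilde L_S^{P}(\hat W_d) + \delta \\
&\;\le\; \tilde L_S^{P}(W_d^{\star}) + \delta \;\le\; \tilde L_S(W^{\star}) + GD\epsilon + \delta \\
&\;\le\; \hat L_S(W^{\star}) + 2\delta + 2GD\epsilon,
\end{align*}
where the three inequalities use, in order, the linearization bound, optimality of $\hat W_d$ for $\tilde L_S^{P}$ together with the JL inner-product preservation, and the linearization bound again; any constant inflation from JL (e.g., using polarization on $\|a\|^2,\|b\|^2,\|a-b\|^2$ rather than a one-shot inner product bound) is absorbed into the factor $2$ in $2GD\epsilon$, giving the stated inequality \eqref{eq_error_bound}.

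The step I expect to require the most care is the second one, namely moving from $\tilde L_S^{P}(W_d^{\star})$ back to $\tilde L_S(W^{\star})$ uniformly over $\cD$. JL is a pointwise statement, so strictly speaking, I need either a union bound over a discretization of the norm ball of radius $D$ (adding only logarithmic factors absorbed into the $O(\epsilon^{-2}\log p)$ dimension) or, equivalently, to apply JL to the finite set $\{g_j^{(i)}\}_{X,i,j}$ of training gradients and then transfer the inner-product approximation to every $\Delta\in\cD$ via the Cauchy--Schwarz bound $\|\Delta\|\le D$. Once this uniform JL guarantee is in place, the rest of the proof is an algebraic chaining exercise, and the Lipschitzness of the logistic loss cleanly converts the scalar linearization and projection errors into loss-level errors at the same scale.
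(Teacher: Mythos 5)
Your proposal is correct and follows essentially the same route as the paper's proof: the same three error sources (the Taylor-expansion error $\delta$ applied twice via the $1$-Lipschitz logistic loss, the JL inner-product distortion of order $GD\epsilon$, and the optimality of $\hat W_d$ in the projected problem), with your $\tilde L_S^{P}$ and $\tilde L_S^{P}(P^{\top}(\cdot-W^{(0)}))$ playing the roles of the paper's $h_1$ and $h_2$. Your observation that the JL guarantee is only needed at the fixed point $W^{\star}$ (the $\hat W_S$ side being an exact adjoint identity) matches how the paper applies the lemma, so no additional uniformity argument is required.
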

The proof is based on using the Johnson-Lindenstrauss Lemma \cite{johnson1984extensions}. The idea is to relate the loss $\hat L_S(\hat W_S)$ and $\min_{W\in\cD} \hat L_S(W)$ by the $1$-Lipschitz continuous property of the logistic loss. The error in the random projection is bounded by the JL Lemma, and the error in the linear approximation is bounded by Taylor's expansion error $\delta$. As $\epsilon$ goes to zero, equation \eqref{eq_error_bound} guarantees the gap between $\hat L_S(\hat W_S)$ and $\min \hat L_S(W)$ will be small.
We note that this error bound also applies when approximating certain layers by restricting the search space of the parameters in those layers.

\section{Experiments}\label{sec_experiments}

We now evaluate \acronym{} on algorithmic reasoning tasks in various settings. The evaluation focuses on the following questions.
First, how accurate is the first-order loss approximation for algorithmic reasoning tasks?
Second, how well does \acronym{} perform on the CLRS benchmark, and what hierarchical structures does it discover?
Third, how effectively does \acronym{} generalize to text-based graph reasoning and large-scale multitask scenarios? 

Our experiments show that the first-order approximation estimates the losses of fully trained networks with a relative error of less than 5\% across four GNNs and four language models.
On twelve algorithmic tasks from the CLRS benchmark, \acronym{} outperforms state-of-the-art networks by an average of \textbf{3.7}\% and the strongest multitask baselines, LearningToBranch \cite{guo2020learning} and GradTAG \cite{li2024scalable_kdd}, by \textbf{1.2}\%, while reducing GPU hours by \textbf{48}\% and memory usage by \textbf{26}\%.
\acronym{} reveals branching structures that group tasks with similar intermediate steps.
On three text-based graph-reasoning benchmarks, it achieves up to a \textbf{3.2}\% gain over the strongest multitask baseline.
Finally, on a large-scale community dataset comprising 500 labeling tasks and 21M edges, \acronym{} achieves a \textbf{28}\% relative improvement over the baselines, with the best trade-off between runtime and memory usage.

\subsection{Experimental Setup}

\subsubsection{Datasets and Models} 

We evaluate \acronym{} on various algorithmic reasoning tasks in both graph and text formats, focusing on performance, runtime, and memory trade-offs.
First, we use the graph-format datasets of twelve graph algorithmic reasoning tasks in the CLRS benchmark ~\cite{velivckovic2022clrs}. 
The full list of tasks is described in Table \ref{tab_clrs_tasks}. 
Following the protocol of CLRS, for each algorithm, we use a training dataset of 1,000 Erd\"os-R\'enyi graphs with 16 nodes and a validation set of 32 graphs with 16 nodes. 
We then evaluate performance on a test set of 32 graphs with 64 nodes. On CLRS, we evaluate \acronym{} with the edge transformer~\cite{muller2024towards}, which achieves state-of-the-art performance. We also evaluate \acronym{} using a standard MPNN~\cite{velivckovic2022clrs} and graph attention networks~\cite{velivckovic2018graph}.

\begin{table*}[t!]
\centering
\caption{Definitions of input and intermediate steps of $12$ graph-based algorithmic reasoning tasks from the CLRS benchmark. The input graphs are sampled from an Erd\"os-R\'enyi random graph distribution with edge sampling probability $p = 0.5$, with 16 vertices in total on each graph. The edge weights are uniformly sampled from $[0,1]$. The source node is sampled randomly from the entire vertex set.
}\label{tab_clrs_tasks}
\resizebox{\textwidth}{!}
{
\begin{tabular}{@{} p{4.0cm} p{5cm} p{3cm} p{8cm} @{}}
\toprule
Task & Input Graph & Additional Input & Encodings of Intermediate Steps \\ 
\midrule
Breadth-first search & Undirected & Source node & Node-level labels indicating its predecessor \\ 
Depth-first search & Directed & Source node & Node-level labels indicating its predecessor \\ 
Topological sort & Directed acyclic & None & Node-level label indicating its predecessor \\
Articulation points & Undirected & None & Node-level binary label for the articulation point \\ 
Bridges & Undirected & None & Edge-level binary label for the bridge \\ 
SCC Kosaraju & Directed & None & Node-level label for the SCC index \\
MST Kruskal & Undirected, weighted & None & Edge-level binary label for MST \\ 
MST Prim & Undirected, weighted & Root of MST & Edge-level binary label for MST \\ 
Dijkstra & Directed, weighted & Source node & Node-level labels indicating its predecessor \\ 
Bellman-Ford & Directed, weighted & Source node & Node-level labels indicating its predecessor \\ 
DAG shortest paths & Directed, acyclic, weighted & Source node & Node-level labels indicating its predecessor \\ 
Floyd-Warshall & Undirected, weighted & None & Edge-level labels indicating the source node \\ 
\bottomrule

\end{tabular}
}
\end{table*}

Second, we evaluate \acronym{} on text-based graph algorithmic reasoning tasks, where we construct a branching network of LoRA adapters to fine-tune LLMs.
We first use the CLRS-Text benchmark~\cite{markeeva2024clrs}, which encodes graphs as flattened adjacency lists and represents outputs as the intermediate steps from CLRS. We use the same twelve graph algorithms as in CLRS, with 1,000 training graphs of 10 nodes and 200 graphs each for validation and testing.
Second, we evaluate on GraphQA~\cite{fatemitalk}, which comprises text-based graph reasoning tasks, including edge existence, node degree, and cycle detection. We use twelve tasks from this benchmark. 
Third, we evaluate on the datasets from GraphWiz \cite{chen2024graphwiz}, which constructs textual descriptions of the reasoning processes for solving graph-related tasks by prompting from GPT-4. We use the nine tasks from this dataset. 
The description of the tasks, their inputs, and outputs is provided in Appendix~\ref{sec_experiment_details}.
For the datasets, we fine-tune Qwen-3-1.7B and Llama-3-1B using LoRA~\cite{hu2021lora} with rank 16.

Lastly, we evaluate \acronym{} on a large-scale multi-label prediction task involving 500 labels. We use the Orkut social network dataset from SNAP~\cite{yang2013overlapping}, which contains 395K nodes, 21M edges, and 500 community labels. Each community is a subgraph of a graph.  Given a partially labeled subgraph, the task is to predict the remaining node labels. For each community, we sample 10\% of the nodes for training, 20\% for validation, and use the remaining nodes for testing. As the encoder, we use a 3-layer SIGN model~\cite{frasca2020sign} with 256 hidden units, an efficient variant of GCNs.

\subsubsection{Metrics}
In the CLRS benchmark, we follow the official protocol and report task scores on the test set using the metric specified for each task, such as the $F_1$ score for node-label predictions. The $F_1$-scores are evaluated between model outputs and task labels at the final step. 
For text-based graph reasoning tasks, we evaluate test accuracy between model outputs and task labels at every intermediate step. 
In community detection, we evaluate the average macro $F_1$-score over 500 tasks.

We compute the average task score across metrics such as accuracy or $F_1$-score over all tasks and report the error rate as one minus the average task score. We measure the efficiency in GPU hours and GPU memory usage in GB. 

\subsubsection{Implementations} Recall that \acronym{} requires setting the number of subsets $m$, the subset size $\alpha$, the gradient projection dimension $d$, and the number of clusters $g$. To determine the task clusterings, we tune the hyperparameters sequentially, one at a time.
We begin by increasing the number of subsets until the affinity scores converge. For task counts $n$ from $12$ to $500$, we vary $m$ between $200$ to $5000$ and vary $\alpha$ between $3$ to $25$. We observe that affinity scores typically stabilize once $m$ reaches over $10n$. 
Next, we compute task affinities using the linear approximation technique, varying the dimension of the gradient projection $d$ from $200$ to $1000$. We find that $d$ greater than $400$ is sufficient to yield approximation errors of less than 5\%.  
After obtaining affinity scores, we tune the cluster size and select the configuration with the highest average within-cluster affinity score, which can be done efficiently by running the clustering step. We discuss key parameters in deciding the branching structure in Section \ref{sec_ablation}.

\begin{table}[t!]
\centering
\caption{We measure the approximation error of losses on algorithmic reasoning datasets using a first-order Taylor expansion around the initialization trained on all tasks. Results are averaged over 50 random subsets of algorithmic reasoning tasks. We measure the errors across four GNNs, including MPNN \cite{velivckovic2022clrs}, GAT \cite{velivckovic2018graph}, Triplet-GMPNN \cite{ibarz2022generalist}, and Edge Transformer \cite{muller2024towards}. We measure four LLMs with up to 34 billion parameters for text-based reasoning tasks (by applying the approximation to LoRA parameters).}\label{table_compare_approximation_error}
{\small\begin{tabular}{@{}cccccc@{}}
\toprule
 & MPNN & GAT & Triplet-GMPNN & Edge Transformer \\ \midrule
Dist. & RSS & RSS & RSS & RSS \\ \midrule
2\%  &  2.3$_{\pm 0.7} \times 10^{-3}$  & $1.2_{\pm0.1}  \times 10^{-3}$ & $3.6_{\pm0.2}  \times 10^{-4}$  & 3.9$_{\pm 0.4} \times 10^{-3}$  \\
4\%  & 7.0$_{\pm 0.8} \times 10^{-3}$  & $1.3_{\pm0.2}  \times 10^{-3}$ & $1.6_{\pm0.2} \times 10^{-3}$ & 6.5$_{\pm 0.5} \times 10^{-3}$ \\
6\% & 8.2$_{\pm 1.4} \times 10^{-3}$  & $1.6_{\pm0.4}  \times 10^{-3}$ & $3.9_{\pm0.7}  \times 10^{-3}$ &  7.7$_{\pm 1.6} \times 10^{-3}$ \\
8\% & 8.6$_{\pm 1.2} \times 10^{-3}$  & $2.3_{\pm1.4}  \times 10^{-3}$ & $4.1_{\pm0.3}  \times 10^{-3}$ & 8.4$_{\pm 2.4} \times 10^{-3}$\\
10\% & 9.2$_{\pm 2.4} \times 10^{-3}$ &  $3.8_{\pm2.0}  \times 10^{-3}$ & $7.9_{\pm1.2} \times 10^{-3}$ & 9.6$_{\pm 1.1} \times 10^{-3}$  \\ \midrule
& Qwen-3-1.7B &  Llama-3-8B & Qwen-3-14B & CodeLlama-34B \\
\midrule
Dist. & RSS & RSS & RSS & RSS \\ \midrule
2\%  & 3.9$_{\pm 1.8} \times 10^{-3}$ &  4.8$_{\pm 1.9} \times 10^{-3}$  & 2.0$_{\pm1.1}  \times 10^{-3}$ & 2.0$_{\pm1.8} \times 10^{-3}$  \\
4\%  & 5.5$_{\pm 2.1} \times 10^{-3}$ & 4.9$_{\pm2.0} \times 10^{-3}$  & 2.1$_{\pm1.4}  \times 10^{-3}$ & 2.0$_{\pm1.7} \times 10^{-3}$ \\
6\% & 1.1$_{\pm 0.4} \times 10^{-2}$ & 5.5$_{\pm2.0} \times 10^{-3}$  & 2.5$_{\pm1.6}  \times 10^{-3}$ & 2.1$_{\pm1.7} \times 10^{-3}$ \\
8\%  & 2.3$_{\pm 0.9} \times 10^{-2}$ & 7.1$_{\pm2.0} \times 10^{-3}$  & 2.6$_{\pm1.8}  \times 10^{-3}$ & 2.4$_{\pm1.7} \times 10^{-3}$\\
10\% & 4.6$_{\pm 1.6} \times 10^{-2}$ & 9.0$_{\pm3.0} \times 10^{-3}$ &  3.0$_{\pm1.6}  \times 10^{-3}$ & 2.7$_{\pm1.9} \times 10^{-3}$  \\
\bottomrule
\end{tabular}}
\end{table}

\subsection{Approximation Results}\label{sec_approx}

First, we assess the accuracy of the first-order approximation used in our search algorithm. Recall that we apply the approximation to the model loss around an initialization $W^{(0)}$. We show that the error term induced by the approximation is small. Given the input $X$ and an algorithm $A^{(i)}$, we compute the average residual sum of squares (RSS) across all output steps as follows:
\begin{align*}
    \frac{1}{s^{(i)}(X)} \sum_{j=1}^{s^{(i)}(X)} \frac{\bignorm{f_W^{(i)}(X; j) - f^{(i)}_{W^{(0)}}(X; j) - \big[g_j^{(i)}\big]^\top \big( W - W^{(0)} \big) }^2}{\bignorm{f_W^{(i)}(X; j)}^2}. %
\end{align*}

We conduct experiments across various models, using twelve graph algorithmic reasoning tasks from both the CLRS benchmark \cite{velivckovic2022clrs} and its text-format version \cite{markeeva2024clrs}.
On graph-format datasets, we test four GNN architectures, including a standard MPNN \cite{velivckovic2022clrs}, GAT \cite{velivckovic2018graph}, Triplet-GMPNN \cite{ibarz2022generalist}, and Edge Transformer \cite{muller2024towards}. 
On text-format datasets, we test four language models, including Qwen-3-1.7B, Llama-3-8B, Qwen-3-14B, and CodeLlama-34B. We use LoRA \cite{hu2021lora} as the base fine-tuning procedure.

To evaluate the RSS, we first obtain an initial model $W^{(0)}$ by training on all datasets. We then fine-tune the model on random task subsets to obtain $W$ and evaluate the RSS under various relative distances $\frac {\norm{W - W^{(0)}}} {\norm{W}}$. For language models, $W$ denotes the parameters of the LoRA adapters. Across 50 random task subsets, we find that the fine-tuned weights remain close to the initialization, typically within 10\% relative distance.

The results are shown in Table \ref{table_compare_approximation_error}.
For all four GNN architectures, the first-order approximation yields under \textbf{1}$\%$ relative error when the distance from initialization is within 10\%.
On language models, the error remains below \textbf{5}\%, with larger models generally yielding lower errors.
We further observe that freezing early layers improves approximation accuracy. When the first three layers are frozen, the RSS drops below \textbf{0.06}\%, shown in Appendix \ref{sec_additional_approximation_results}. 
This supports the premise of our algorithm: avoiding repeated training on task subsets. 

\subsection{Comparison Results}

\subsubsection{Baselines}
We compare \acronym{} against multitask optimization baselines.
Multitask network (MTN): This trains a single shared network on all tasks. 
Multi-gate Mixture-of-Experts model (MMoE) \cite{ma2018modeling}: This uses task-specific gating networks to combine outputs from multiple expert models. 
Task Affinity Grouping (TAG) \cite{fifty2021efficiently}: This clusters tasks based on pairwise affinities and trains a separate model for each group.
LearningToBranch \cite{guo2020learning}: This is the most recent branching network that learns branching decisions using the Gumbel-Softmax technique.  
GradTAG \cite{li2024scalable_kdd}: This is the most recent multitask learning approach that estimates task affinities based on gradients and trains separate networks per group, without a branching structure. 
For the baselines, we select network size based on average validation performance across tasks.

\subsubsection{Results on GNNs} 

Shown in Figure~\ref{fig_clrs_results}, \acronym{} achieves the best trade-off between performance, runtime, and memory among all baselines.
Compared with a single Edge Transformer, which previously achieved the state-of-the-art on CLRS, our method improves average accuracy by \textbf{3.7}\%.
Relative to the strongest multitask baselines, including LearningToBranch~\cite{guo2020learning} and GradTAG~\cite{li2024scalable_kdd}, \acronym{} yields a \textbf{1.2}\% accuracy gain while reducing GPU hours by \textbf{48}\% and memory usage by \textbf{26}\%.
We observe similar performance gains when using MPNN as the backbone. Our approach achieves \textbf{1.2}\% accuracy gains over the best multitask baselines, while reducing GPU hours by \textbf{50}\% and GPU memory usage by \textbf{29}\%. 
We report the full results in Table \ref{tab_full_clrs_benchmark_results}.

\begin{figure*}[t!]
    \begin{minipage}[b]{0.24\textwidth}
        \centering
        \includegraphics[width=0.99\textwidth]{./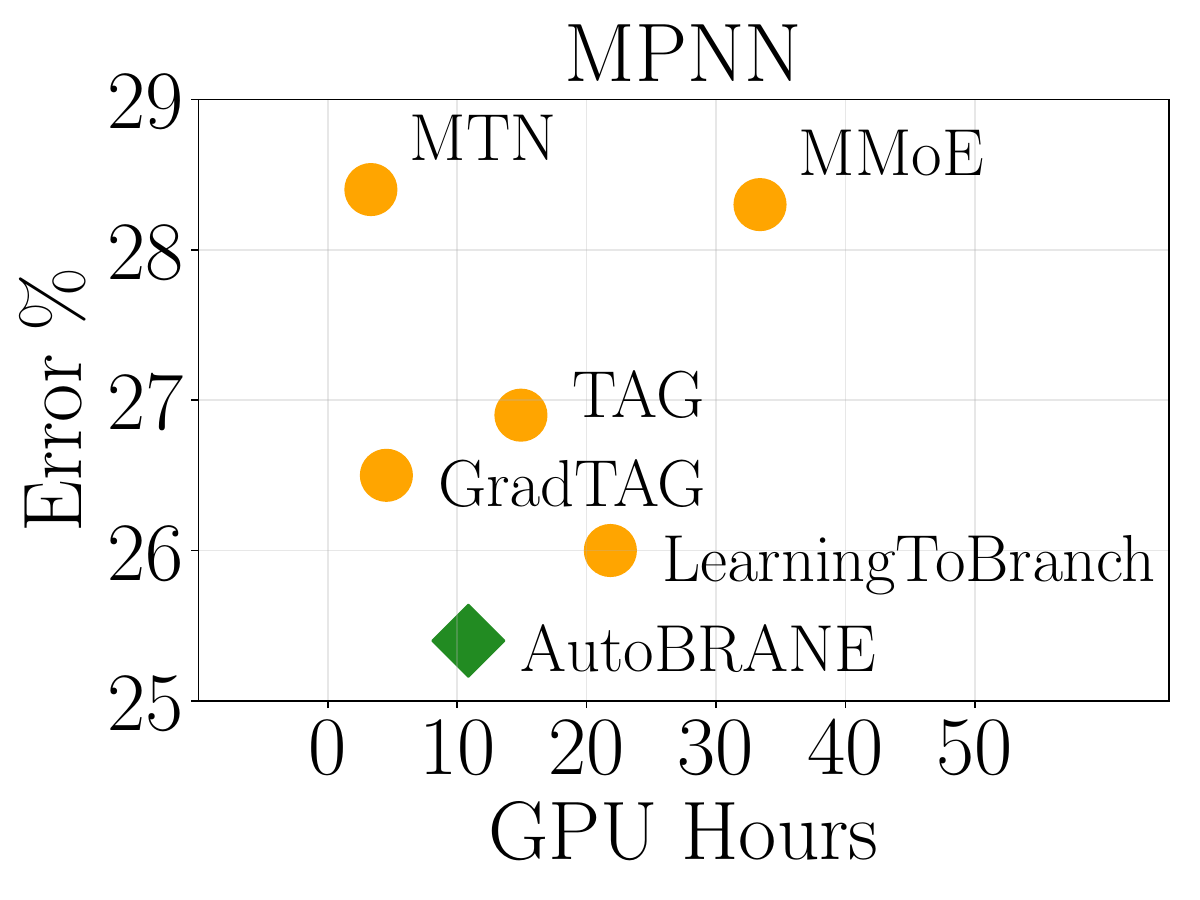}
    \end{minipage}
    \begin{minipage}[b]{0.24\textwidth}
        \centering
        \includegraphics[width=0.99\textwidth]{./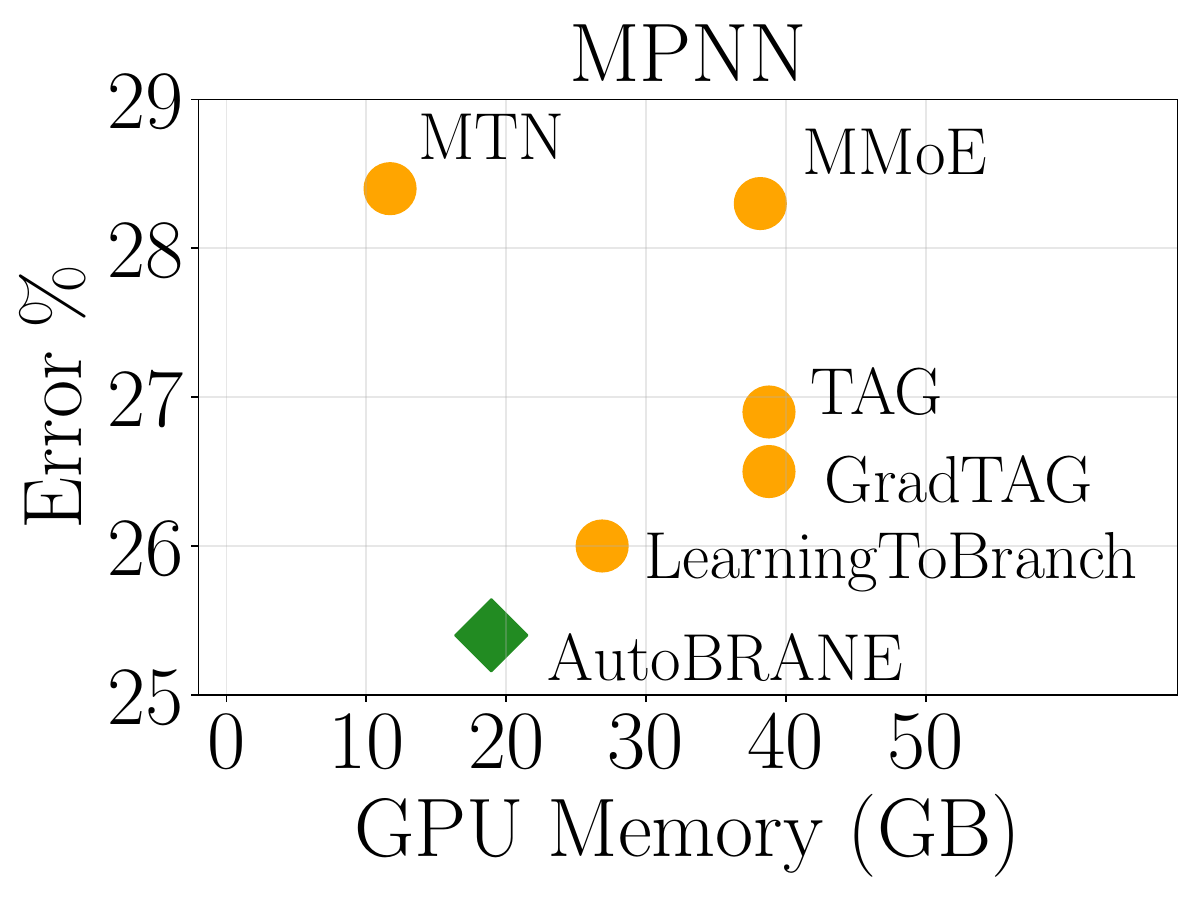}
    \end{minipage}
    \begin{minipage}[b]{0.24\textwidth}
        \centering
        \includegraphics[width=0.99\textwidth]{./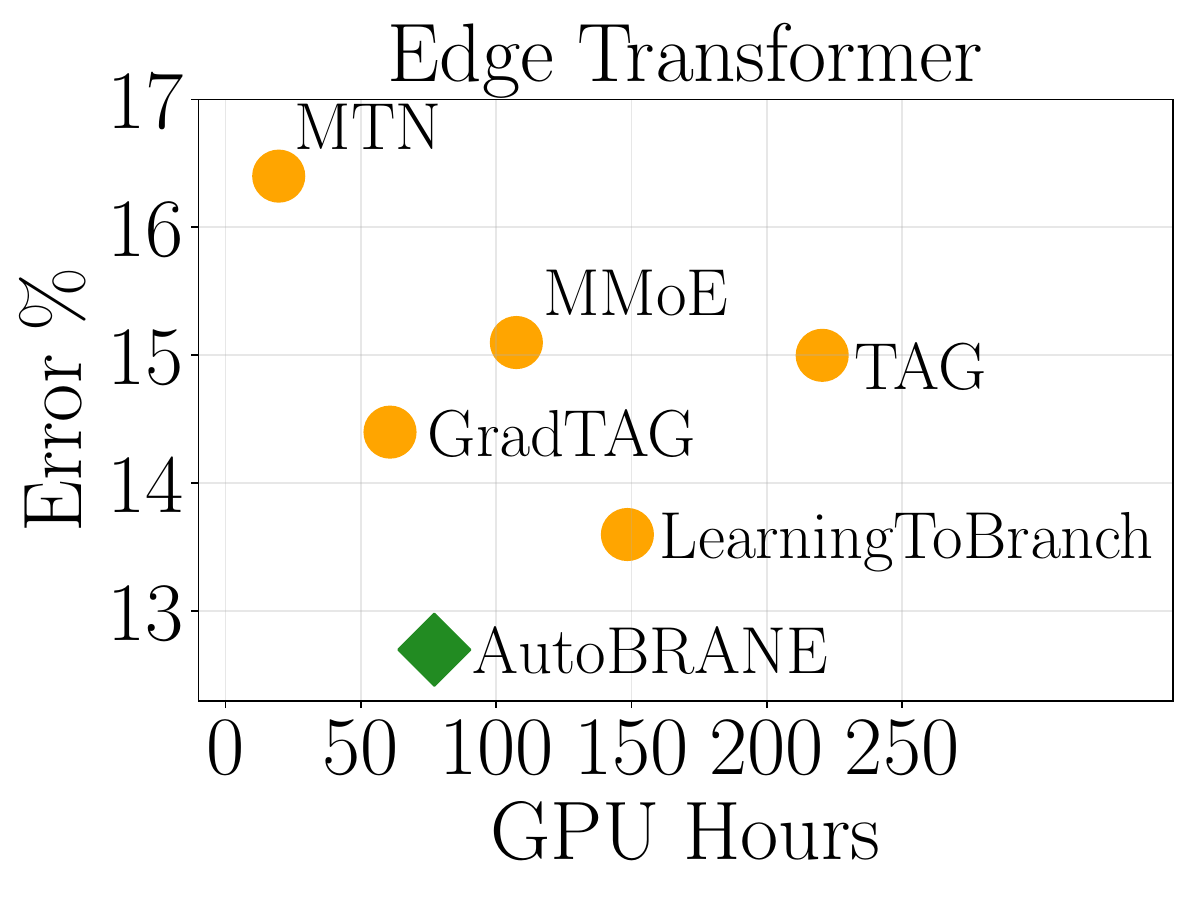}
    \end{minipage}
    \begin{minipage}[b]{0.24\textwidth}
        \centering
        \includegraphics[width=0.99\textwidth]{./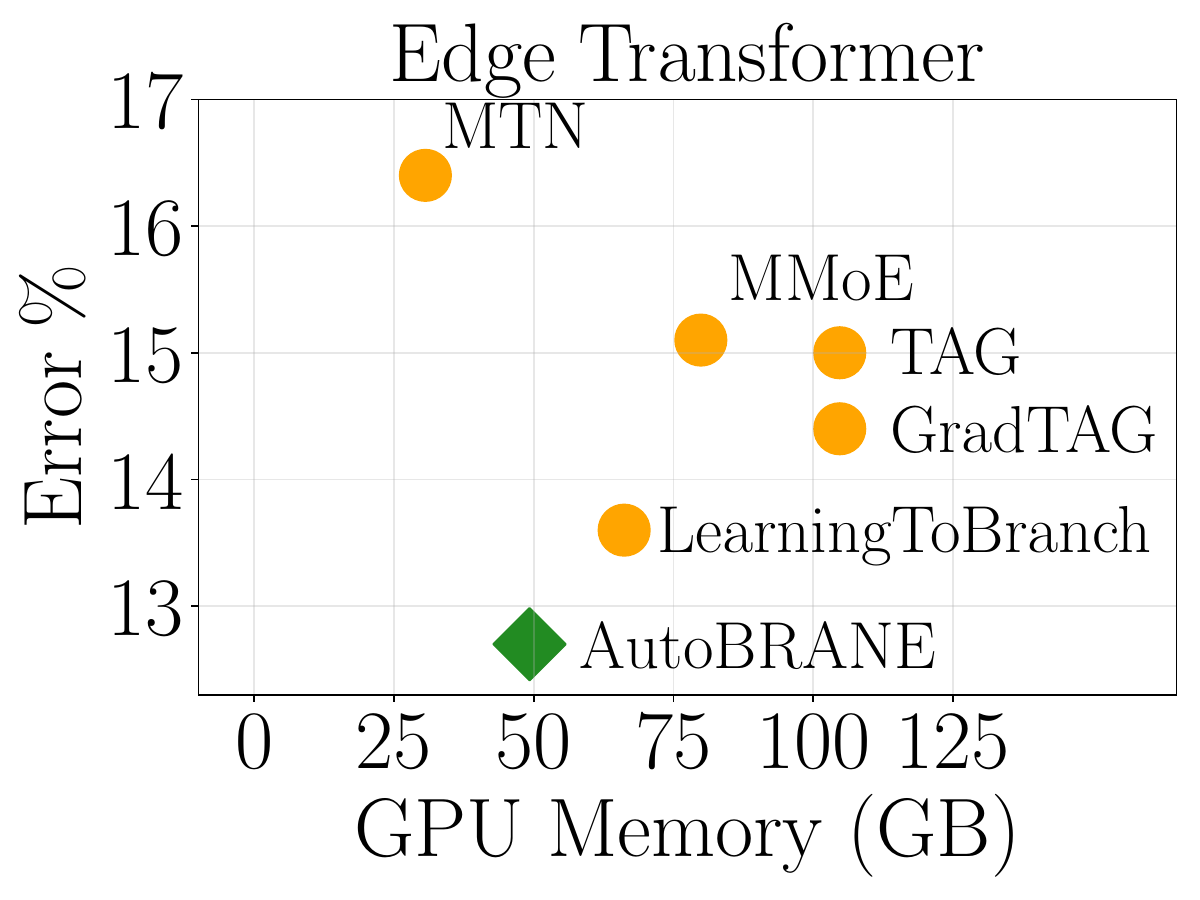}
    \end{minipage}
    \caption{Illustration of the trade-off between error rate, GPU hours, and memory usage for \acronym{} compared to existing multitask and branching network baselines. We present results obtained using MPNNs or edge transformers~\cite{muller2024towards} as the base model. 
    \acronym{} outperforms a single multitask network by 3.7\%, demonstrating the effectiveness of branching networks in leveraging positive task transfer. It also achieves the best overall trade-off, reducing the average error rate by 1.2\% compared to the strongest baseline, while using 48\% fewer GPU hours and 26\% less memory.}\label{fig_clrs_results}
\end{figure*}

\begin{figure}[h!]
    \centering
    \includegraphics[width=0.99\textwidth]{./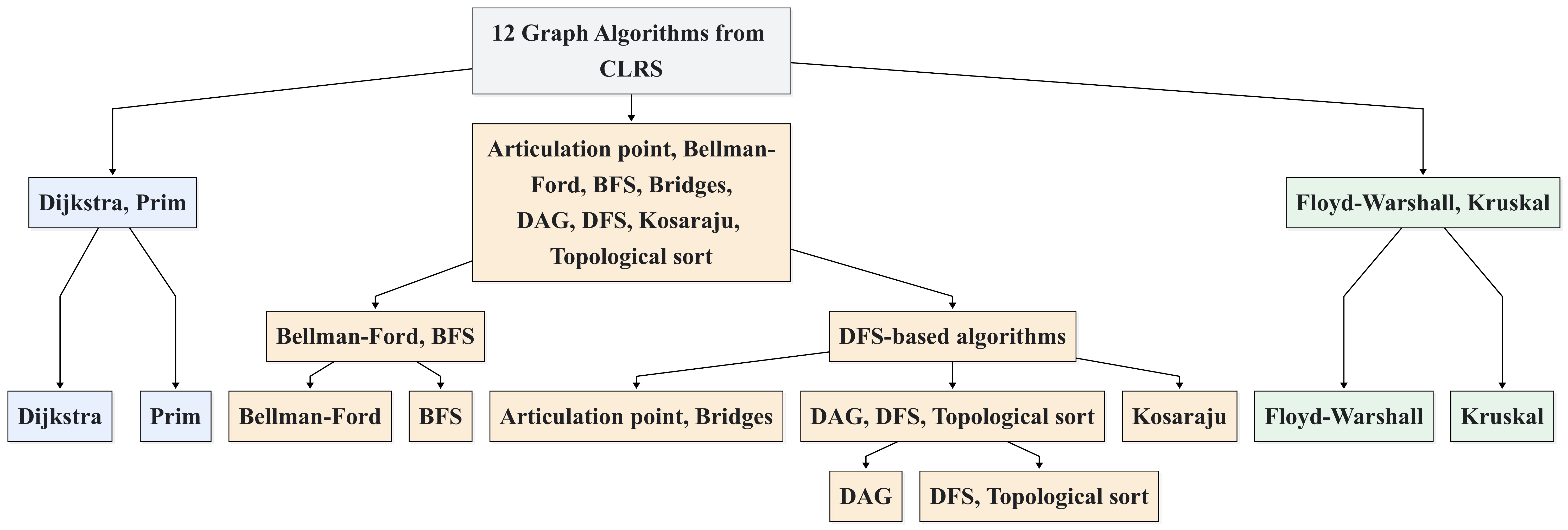}
    \caption{Illustration of the tree structure of the branching network learned on twelve algorithmic reasoning tasks from the CLRS benchmark. Our algorithm identifies clusters of algorithms that follow similar intermediate steps.
    }
    \label{fig_illustration_of_tree_clrs_tasks}
\end{figure}

Figure~\ref{fig_illustration_of_tree_clrs_tasks} illustrates the branching network structure discovered by \acronym{} when training Edge Transformers on the twelve graph-based algorithmic reasoning tasks.
The resulting structure aligns well with task similarities in their intermediate steps, revealing three major clusters. The largest includes BFS, Bellman-Ford, and several DFS-based algorithms. Notably, BFS and Bellman-Ford are grouped together, consistent with observations from~\cite{velivckovic2019neural}.
Five DFS-related tasks, including topological sort and DAG shortest paths, are clustered around DFS.
Prim’s and Dijkstra’s algorithms form a group, reflecting their shared greedy edge-selection strategy. Kruskal’s and Floyd-Warshall are grouped as well, both involving edge selection within components.
A similar structure is observed using MPNN, which we omit for brevity.

\begin{table*}[t!]
\centering
\caption{Test score (\%) evaluated on twelve graph algorithmic reasoning tasks. 
We report the average $F_1$-score over all tasks, GPU memory of the model, and runtime in terms of GPU hours, for each method. 
The test score is evaluated between model outputs and task labels at the final step.
For each experiment, we run the experiment with three random seeds and report the average.}\label{tab_full_clrs_benchmark_results}
\resizebox{\textwidth}{!}
{\begin{tabular}{@{}lcccccccccccc|ccc@{}}
\toprule
MPNN & STN & MTN & MMoE & TAG & LearningToBranch & GradTAG & \acronym{} \\ 
\midrule
BFS & 100.0 $\pm$ 0.0 & 98.0 $\pm$ 0.4 & 89.4 $\pm$ 0.2 & 99.4 $\pm$ 0.5 & 93.2 $\pm$ 0.4 & 99.4 $\pm$ 0.4 & 99.6 $\pm$ 0.3 \\
DFS & 38.7 $\pm$ 1.5 & 29.7 $\pm$ 1.6 & 30.2 $\pm$ 2.7 & 36.7 $\pm$ 1.4 & 34.4 $\pm$ 3.2 & 25.1 $\pm$ 4.5 & 38.7 $\pm$ 1.5 \\
Topo. sort & 66.0 $\pm$ 2.6 & 71.3 $\pm$ 3.6 & 74.8 $\pm$ 2.4 & 72.8 $\pm$ 1.2 & 75.4 $\pm$ 2.1 & 61.4 $\pm$ 3.1 & 74.2 $\pm$ 1.7 \\
Articulation points & 99.8 $\pm$ 0.2 & 99.8 $\pm$ 0.1 & 97.1 $\pm$ 1.2 & 97.4 $\pm$ 1.3 & 95.5 $\pm$ 1.6 & 85.7 $\pm$ 0.8 & 93.0 $\pm$ 0.2 \\
Bridges & 82.6 $\pm$ 2.0 & 69.3 $\pm$ 0.4 & 91.1 $\pm$ 1.2 & 92.1 $\pm$ 1.9 & 92.2 $\pm$ 2.4 & 81.2 $\pm$ 2.5 & 96.2 $\pm$ 1.4\\
SCC Kosaraju & 93.1 $\pm$ 3.1 & 92.4 $\pm$ 4.7 & 93.5 $\pm$ 2.6 & 89.0 $\pm$ 1.3 & 93.4 $\pm$ 2.6 & 90.1 $\pm$ 3.5 & 92.0 $\pm$ 1.8\\
MST Kruskal & 69.9 $\pm$ 1.5 & 63.6 $\pm$ 2.0 & 66.2 $\pm$ 1.4 & 64.5 $\pm$ 0.8 & 68.0 $\pm$ 2.5 & 69.4 $\pm$ 1.1 & 66.4 $\pm$ 1.3 \\
MST Prim & 53.8 $\pm$ 1.3 & 54.3 $\pm$ 3.6 & 52.3 $\pm$ 1.4 & 50.1 $\pm$ 3.5 & 54.3 $\pm$ 2.7 & 95.3 $\pm$ 1.7 & 52.2 $\pm$ 2.4 \\
Dijkstra & 70.9 $\pm$ 2.3 & 69.3 $\pm$ 1.9 & 69.3  $\pm$ 2.4 & 67.3 $\pm$ 1.5 & 72.1 $\pm$ 1.4 & 78.3 $\pm$ 1.0 & 68.8 $\pm$ 1.7 \\
Bellman-Ford & 63.8 $\pm$ 4.1 & 59.2 $\pm$ 3.3 & 56.8 $\pm$ 1.6 & 61.9 $\pm$ 2.7 & 58.0 $\pm$ 3.4 & 85.9 $\pm$ 0.6 & 61.8 $\pm$ 2.4 \\
DAG shortest paths & 89.2 $\pm$ 1.4 & 90.0 $\pm$ 2.7 & 83.6 $\pm$ 1.6 & 89.8 $\pm$ 1.0 & 86.1 $\pm$ 2.4 & 88.6 $\pm$ 0.7 & 89.1 $\pm$ 1.5 \\
Floyd-Warshall & 69.4 $\pm$ 0.5 & 62.5 $\pm$ 0.7 & 56.1 $\pm$ 0.7 & 58.4 $\pm$ 1.7 & 58.1 $\pm$ 2.8 & 25.4 $\pm$ 1.4 & 63.1 $\pm$ 1.3 \\ \midrule
Avg. score & 75.2 & 71.6 & 71.7 & 73.1 & 73.7 & 73.5 & 74.6 \\ 
GPU hours & 3.2 & 3.3 & 33.4 & 	14.9 &  21.8 & 4.5 & 10.8	 \\
GPU memory & 49.6 & 11.7 & 38.2 & 38.8 & 26.8 & 38.8 & 18.9 \\
\midrule
Edge Transformers & STN & MTN & MMoE & TAG & LearningToBranch & GradTAG & \acronym{} \\ \midrule 
BFS & 99.7 $\pm$ 0.4 & 98.6 $\pm$ 0.1 & 100.0 $\pm$ 0.0 & 100.0 $\pm$ 0.0 & 100.0 $\pm$ 0.0 & 99.7 $\pm$ 0.0 & 99.8 $\pm$ 0.0 \\
DFS & 65.6 $\pm$ 1.6 & 51.4 $\pm$ 1.5 & 39.9 $\pm$ 2.6 & 34.1 $\pm$ 1.8 & 36.7 $\pm$ 1.8 & 37.6 $\pm$ 3.7 & 42.6 $\pm$ 2.4 \\
Topo. sort & 98.7 $\pm$ 0.2 & 99.0 $\pm$ 0.9 & 97.7 $\pm$ 0.6 & 97.7 $\pm$ 0.7 & 98.7 $\pm$ 0.9 & 97.6 $\pm$ 2.3 & 96.2 $\pm$ 0.2 \\
Articulation points & 93.0 $\pm$ 3.6 & 96.4 $\pm$ 1.4 & 89.2 $\pm$ 2.3 & 89.2 $\pm$ 2.1 & 91.2 $\pm$ 1.4 &  99.0 $\pm$ 0.3  & 98.1 $\pm$ 1.0 \\
Bridges & 91.9 $\pm$ 0.3 & 92.2 $\pm$ 0.3 & 99.0 $\pm$ 0.0 & 99.2 $\pm$ 0.1 & 99.2 $\pm$ 0.2 &  93.5 $\pm$ 1.7 & 97.3 $\pm$ 0.2 \\
SCC Kosaraju & 65.8 $\pm$ 3.6 & 69.1 $\pm$ 2.9 & 64.2 $\pm$ 3.4 & 75.3 $\pm$ 3.4 & 76.7 $\pm$ 1.2 &  68.1 $\pm$ 2.8 & 63.9 $\pm$ 1.2 \\
MST Kruskal & 84.0 $\pm$ 1.4 & 82.2 $\pm$ 0.8 & 80.4 $\pm$ 1.8 & 80.4 $\pm$ 1.6 & 81.4 $\pm$ 1.8 & 79.5 $\pm$ 0.5 & 81.9 $\pm$ 0.3 \\
MST Prim & 93.0 $\pm$ 1.6 & 71.9 $\pm$ 1.4 & 85.5 $\pm$ 3.0 & 85.5 $\pm$ 1.8 & 87.1 $\pm$ 3.0 & 84.6 $\pm$ 1.0 & 92.8 $\pm$ 1.4 \\
Dijkstra & 91.9 $\pm$ 1.6 & 93.1 $\pm$ 3.8 & 94.0 $\pm$ 1.6 & 92.5 $\pm$ 1.1 & 93.7 $\pm$ 2.0 & 96.9 $\pm$ 0.4 & 98.7  $\pm$ 0.9\\
Bellman-Ford & 89.9 $\pm$ 1.4 & 90.3 $\pm$ 2.3 & 92.7 $\pm$ 1.7 & 90.2 $\pm$ 1.4 & 92.4 $\pm$ 1.9 & 90.4 $\pm$ 0.6 & 96.7 $\pm$ 1.4 \\
DAG shortest paths & 97.6 $\pm$ 0.8 & 96.1 $\pm$ 0.2 & 98.3 $\pm$ 0.7 & 98.3 $\pm$ 0.3 & 99.3 $\pm$ 0.4 & 94.7 $\pm$ 0.5 & 99.0 $\pm$ 0.6 \\
Floyd-Warshall & 61.5 $\pm$ 3.4 & 76.9 $\pm$ 2.3 & 77.7 $\pm$ 1.4 & 77.7 $\pm$ 1.5 & 78.7 $\pm$ 2.7 & 82.2 $\pm$ 0.7 & 82.8 $\pm$ 1.8 \\ \midrule
Avg. score & 86.0 & 83.6 & 84.9 & 85.0 & 86.4 & 85.4 & 87.3 \\
GPU hours & 30.4 & 19.6 &  107.5 & 220.5 & 148.5 & 60.8 & 77.2 \\
GPU memory & 126.1 & 30.6 & 79.9 & 104.7 & 66.2 & 104.7 & 49.2\\
\bottomrule
\end{tabular}}
\end{table*}

\begin{table}[t!]
\centering
\caption{Test accuracy (\%) evaluated on text-based graph reasoning tasks from the CLRS-Text, GraphQA, and GraphWiz benchmarks. We compare our approach with fine-tuning a single adapter (MTN) and the most competitive MTL baseline, GradTAG \cite{li2024scalable_kdd}. 
We evaluate the average test accuracy between outputs and task labels at every intermediate step. We then compute the average accuracy over all tasks. We run each experiment with three random seeds and report the average.
}\label{tab_full_text_graph_reasoning}
 \resizebox{0.6\textwidth}{!}
{\begin{tabular}{@{}lccclccccccccccc@{}}
\toprule
CLRS-Text & MTN & GradTAG & \acronym{} \\ 
\midrule
BFS & 79.3 $\pm$ 0.0 & 83.5 $\pm$ 0.4 & 
89.6 $\pm$ 0.2 \\
DFS & 51.1 $\pm$ 0.7 & 55.7 $\pm$ 0.8 & 62.5 $\pm$ 1.9  \\
Topological sort & 19.0 $\pm$ 1.0 & 18.8  $\pm$ 0.1 & 18.2 $\pm$ 0.4  \\
Articulation points & 96.9 $\pm$ 0.1 & 97.4 $\pm$ 0.3 & 98.4 $\pm$ 0.2 \\
Bridges & 73.4 $\pm$ 0.0 & 73.3 $\pm$ 0.8 & 73.2 $\pm$ 0.7  \\
SCC Kosaraju & 93.6 $\pm$ 0.2 & 94.1 $\pm$ 0.5 & 95.2 $\pm$ 0.1 \\
MST Kruskal & 98.5 $\pm$ 0.2 & 98.6 $\pm$ 0.2 & 98.9 $\pm$ 0.9 \\
MST Prim & 62.1 $\pm$  0.1 & 62.9 $\pm$ 0.3 & 64.2 $\pm$ 0.2  \\
Dijkstra & 63.6 $\pm$ 0.8 & 64.1 $\pm$ 0.7 & 65.0 $\pm$ 0.3  \\
Bellman-Ford & 74.8 $\pm$ 0.3 & 79.1 $\pm$ 0.8 & 85.0 $\pm$ 0.1 \\
DAG & 91.1 $\pm$ 0.9 & 89.1 $\pm$ 0.9 & 85.7 $\pm$ 0.2 \\
Floyd-Warshall & 29.3  $\pm$  0.5 & 34.5 $\pm$  0.9 & 43.0 $\pm$  0.9  \\ \hline
Avg. accuracy & 69.4 & 70.9 & 73.3 \\ \midrule
GraphQA & MTN & GradTAG & \acronym{} \\ \midrule
Edge existence & 96.7 $\pm$ 0.3 & 100.0 $\pm$ 0.0 & 99.6 $\pm$ 0.4 \\
Node degree & 98.1 $\pm$ 0.1 &  99.2 $\pm$ 0.3 & 99.7 $\pm$ 0.3 \\
Node count & 100.0 $\pm$  0.0 &  100.0 $\pm$ 0.0 & 100.0 $\pm$ 0.0 \\
Edge count & 67.2 $\pm$ 2.2 & 68.0 $\pm$ 2.4 & 73.1 $\pm$ 1.7  \\
Connected nodes & 99.5 $\pm$ 0.2 &  99.9 $\pm$ 0.1 & 100.0 $\pm$ 0.0 \\
Cycle check & 99.4 $\pm$ 0.2 &  99.4 $\pm$ 0.1 & 99.8 $\pm$ 0.2 \\
Disconnected nodes & 81.2 $\pm$ 1.7 & 81.2 $\pm$ 0.6 & 94.6 $\pm$ 0.8 \\
Reachability & 98.0 $\pm$ 0.2 & 97.7 $\pm$ 0.2 & 98.2 $\pm$ 0.9 \\
Shortest path & 86.6 $\pm$ 0.1 & 91.0 $\pm$ 1.0 &  90.4 $\pm$ 0.4 \\
Maximum flow & 48.1 $\pm$ 0.3 & 46.9 $\pm$ 0.8 & 47.7 $\pm$ 0.9  \\
Triangle counting & 60.8 $\pm$ 0.2 & 62.1 $\pm$ 0.4 & 61.8 $\pm$ 0.9  \\
Node classification & 94.9 $\pm$ 3.5 & 94.9 $\pm$ 0.5 & 99.1 $\pm$ 0.3 \\ \hline
Avg. accuracy  & 85.9 & 86.7 & 88.7 \\ \midrule 
GraphWiz & MTN & GradTAG & \acronym{} \\ \midrule
Cycle Detection  & 45.6 $\pm$ 0.6 & 43.6 $\pm$ 0.5 & 43.8 $\pm$ 0.7 \\
Connectivity & 75.6 $\pm$ 0.7 &  76.2 $\pm$ 0.5 & 78.3 $\pm$ 0.8 \\
Bipartite Graph & 62.7 $\pm$ 0.6 &  64.7 $\pm$ 0.7 & 65.6 $\pm$ 0.5 \\
Topological Sort & 13.7 $\pm$ 0.3 & 20.1 $\pm$ 0.3 & 22.1 $\pm$ 0.3  \\
Shortest Path & 24.7 $\pm$ 0.9 &  25.2 $\pm$ 0.2 & 21.6 $\pm$ 0.8 \\
Maximum Triangle Sum & 	29.3 $\pm$ 0.7 &  29.3 $\pm$ 0.5 & 29.6 $\pm$ 0.6 \\
Maximum Flow &  10.2 $\pm$ 0.7 & 13.2 $\pm$ 0.2 & 15.7 $\pm$ 0.7 \\
Hamilton Path & 48.5 $\pm$ 0.4 & 46.4 $\pm$ 0.5 & 47.8 $\pm$ 0.3 \\
Subgraph Matching & 84.0 $\pm$ 0.9 & 83.1 $\pm$ 0.2 &  85.2 $\pm$ 0.8 \\ \hline
Avg. accuracy  & 43.8 & 44.7 & 45.5 \\ 
\bottomrule
\end{tabular}}
\end{table}

\subsubsection{Results on LLMs}
We now present results on text-based graph reasoning tasks.
\acronym{} is compared against MTN, which fine-tunes a single LoRA adapter across all tasks, and GradTAG~\cite{li2024scalable_kdd}, the strongest multitask baseline. We exclude LearningToBranch~\cite{guo2020learning}, as it is not applicable to text-based datasets.

On CLRS-Text, \acronym{} improves average test accuracy by \textbf{5.5}\% relative to MTN and by \textbf{3.2}\% over GradTAG. This highlights the advantage of the branching structure in capturing varying levels of task similarity.
To demonstrate broader applicability, we also evaluate on the GraphQA and GraphWiz datasets and observe similar gains. On GraphQA, \acronym{} surpasses MTN by \textbf{3.3}\% and GradTAG by \textbf{2.2}\% on average. On GraphWiz, it achieves gains of \textbf{3.8}\% over MTN and \textbf{1.8}\% over GradTAG.
Complete results are provided in Table~\ref{tab_full_text_graph_reasoning}.

\subsubsection{Results on Community Detection} \label{sec_ablation}
We then apply \acronym{} to a large multitask learning instance comprising 500 tasks on the community detection dataset. 
We compare \acronym{} with MTN, LearningToBranch \cite{guo2020learning}, and GradTAG \cite{li2024scalable_kdd}. 
To illustrate the relative improvement, we also compare our algorithm with a well-known community detection method, BigClam \cite{yang2013overlapping}.  

The results are shown in Table \ref{tab_community_detection}. We observe that our algorithm outperforms the most competitive MTL baselines by \textbf{28}\%. Compared to LearningToBranch, our algorithm uses \textbf{4.5}$\times$ less GPU hours. Compared to GradTAG, our algorithm uses \textbf{28}\% less GPU memory. 

\begin{table}[t!]
\centering
\caption{We report the test macro-averaged $F_1$-score over all community labels, the number of GPU hours, and GPU memory usage (in GB) of our algorithm, evaluated on the Orkut community detection dataset with 500 community node labeling tasks.}\label{tab_community_detection}
{\small\begin{tabular}{@{}lcccccccc@{}}
\toprule
    & Macro $F_1$-score & GPU hours & Memory \\ \midrule
    BigCLAM \cite{yang2013overlapping} & 22.69 $\pm$ 0.25 & \textbf{11.2} & 39.1 \\
    MTN \cite{wu2020understanding} & 27.24 $\pm$ 1.66 & 37.8 & \textbf{5.7}\\
    LearningToBranch \cite{guo2020learning} & 34.53 $\pm$ 3.50 & 366.6 & 91.2 \\ 
    GradTAG \cite{li2024scalable_kdd} & 38.77 $\pm$ 4.63 & 68.0 & 125.4 \\
    \midrule
    \acronym{} (Ours) & \textbf{49.69 $\pm$ 2.97} & 81.4 & 89.5\\
\bottomrule
\end{tabular}}
\end{table}

\subsubsection{Ablation Analysis}
Next, we discuss the key parameters in building the branching network, including the number of layers $L$ and the number of clusters $k$ in the final layer.
For GNNs, we tune $L$ on a single multitask network from $3$ to $6$ and select the one with the best average validation performance.
We set $L = 5$ for the CLRS benchmark and $L = 3$ for the community detection dataset.
For language models, $L$ is set as the depth of the pretrained language model, which is $28$ for Qwen-3-1.7B and $16$ for Llama-3-1B. We perform task partitioning every 4 layers. 

For the cluster size, we determine the number of clusters at each layer based on the average affinity score within clusters. We control the growth of the cluster size (within a factor of $5$) to prevent a dramatic expansion of the tree.
On the CLRS benchmark, this leads to $k = 10$ clusters in the final layer.
On the community detection dataset, there are $k=22$ clusters in the final layer.
We note that once task affinity scores are computed, clustering can be repeated within seconds to select the best configuration.
We report all hyperparameters, along with base model runtime and memory usage, in Appendix \ref{sec_experiment_details}.

Furthermore, we validate the branching structure by randomly varying the positions of tasks across branches. Compared to the tree in Figure \ref{fig_illustration_of_tree_clrs_tasks}, we find that swapping positions between branches, such as between Kruskal’s and BFS or DFS and Bellman-Ford, leads to more than 2\% test $F_1$-score drops.

\subsection{Measuring Sample Complexities}\label{sec_sc}

Finally, we evaluate the empirical sample complexity for learning three graph algorithms: BFS, Dijkstra's algorithm, and Prim's algorithm.
In particular, we set the target error rate to 1\% and increase the number of samples until the trained GNN's validation error falls below 1\%. We evaluate the error between model predictions and task labels at the last step of the algorithm.  
The results are shown in Figure~\ref{fig_graph_sc}. 

We find that the number of samples needed for learning the three algorithms differs, with Prim being the most difficult to learn.
We also compare training with and without the loss evaluated on the node labels of intermediate steps. We found that training with intermediate steps reduces the number of samples by approximately a factor of two, on both Dijkstra's and Prim's algorithms. 
Furthermore, we found that standard multitask training (MT) on Dijkstra and Prim tasks with a single GNN network increases the sample complexity of single-task training. It remains an interesting direction for future work to explain this observation. Specifically, what determines the learnability of an algorithm?
One hypothesis is that the higher sample complexity of learning the Prim algorithm comes from the (non-local) nature of its executions.%

\section{Related Work}\label{sec_related}

Recent work has examined whether neural networks can be trained to imitate classical algorithms step-by-step.
For graph algorithms, \citet{velivckovic2019neural} showed that message-passing neural networks with maximization aggregation closely align with the internal computations of algorithms such as reachability and shortest paths.
Their results further indicate that parameter sharing across layers yields positive transfer between related algorithms.
To systematically investigate this challenging problem, the CLRS benchmark was introduced \cite{velivckovic2022clrs}, providing a unified evaluation suite covering thirty algorithms from the standard textbook by \citet{cormen2022introduction}.

\begin{figure}[t!]%
    \centering
    \begin{minipage}[t]{0.33\textwidth}
        \centering
        \includegraphics[width = 0.875\textwidth]{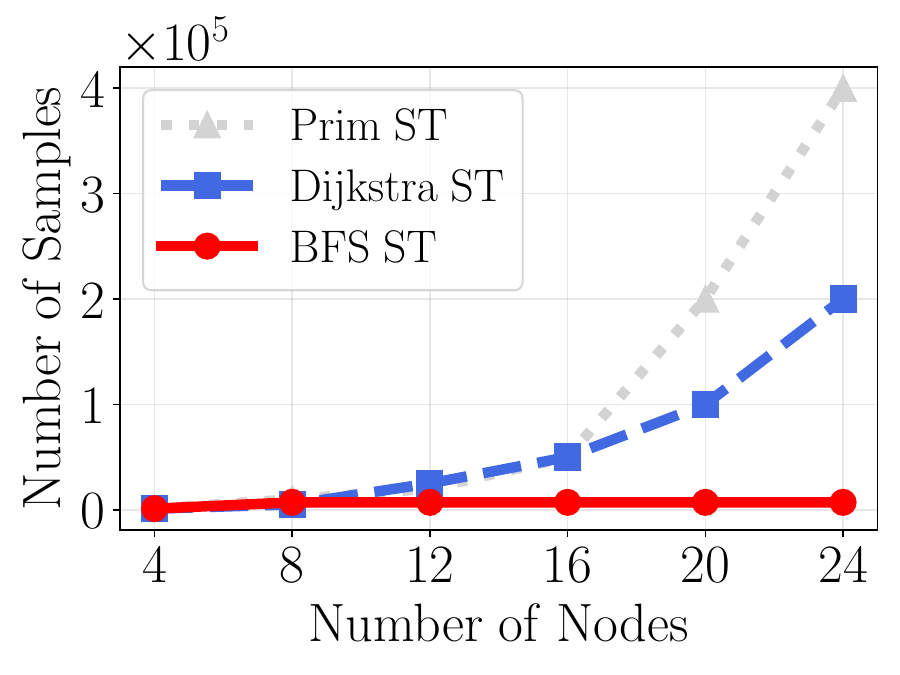}%
    \end{minipage}\hfill
    \begin{minipage}[t]{0.33\textwidth}
        \centering
        \includegraphics[width = 0.875\textwidth]{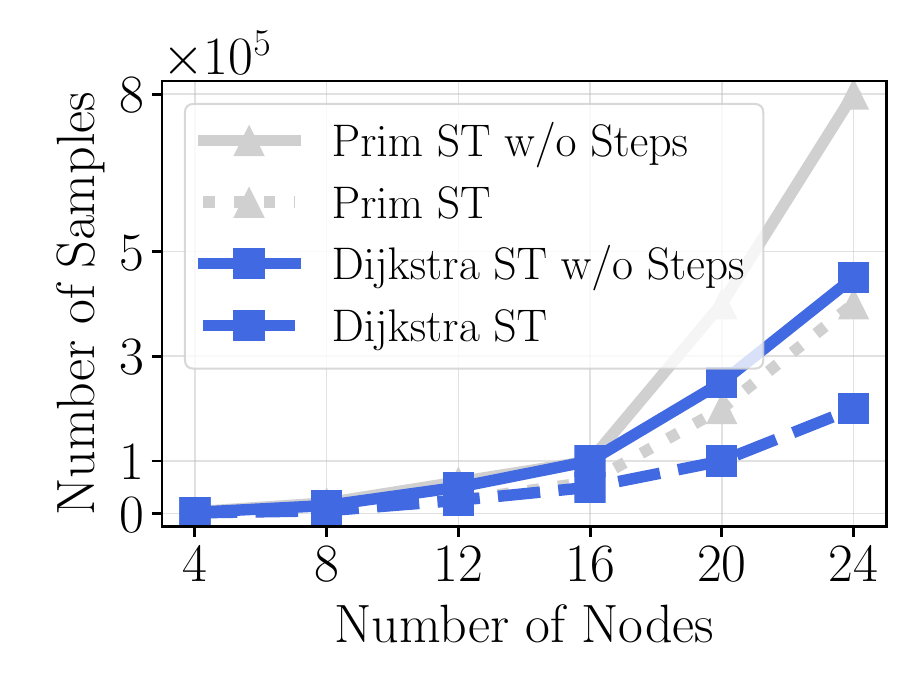}
    \end{minipage} \hfill
    \begin{minipage}[t]{0.33\textwidth}
        \centering
        \includegraphics[width = 0.875\textwidth]{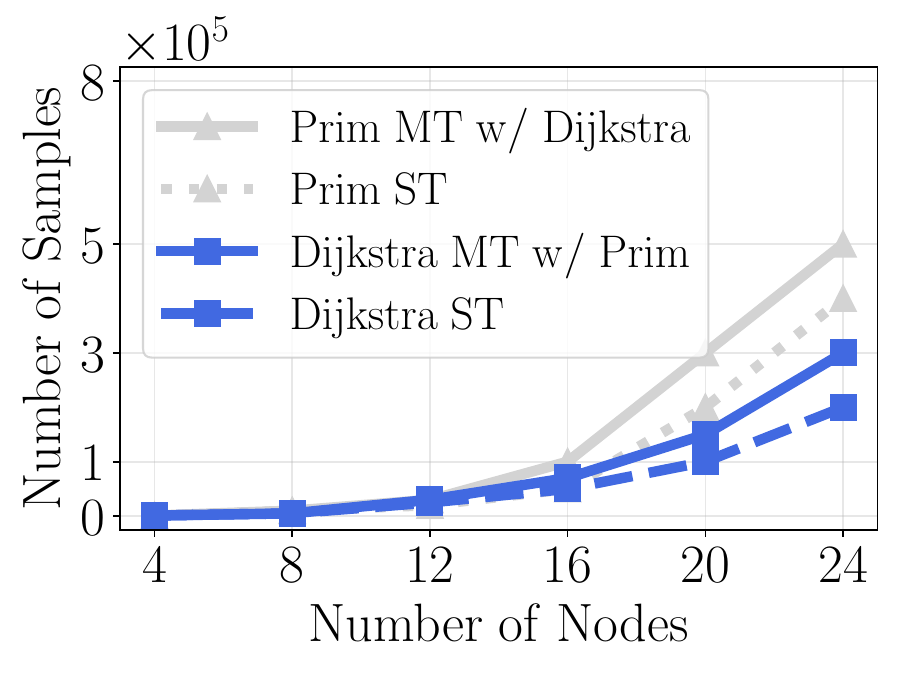}
    \end{minipage}        
    \caption{We illustrate the sample complexity scaling for learning three graph algorithms on graphs with different numbers of nodes. We find that the sample complexity for single-task training (ST) on the three algorithms differs when they are trained separately, with Prim requiring the most training samples. Second, training without the intermediate steps (w/o intermediate Steps) yields higher sample complexity compared to training with intermediate steps. 
    Further, with multitask training (MT), the number of samples increases compared with single-task training. In this experiment, we conduct multitask training on the combined dataset of Dijkstra and Prim tasks.
    }\label{fig_graph_sc}
\end{figure}

Several recent works have proposed architectures tailored for algorithmic reasoning.
\citet{ibarz2022generalist} introduce a triplet-GMPNN model that assigns values to edges and updates each edge based on the values of its adjacent edges.
When applied to tasks such as reachability and shortest paths, the network follows similar execution steps \cite{ibarz2022generalist}. %
More recently, augmenting transformers with tri-attention over node pairs, relative to other-conditioning, has been shown to substantially improve prediction accuracy on algorithmic graph tasks \cite{muller2024towards}.
Branching architectures have also been studied in computer vision \cite{lu2017fully,guo2020learning}. 
Branched multitask networks \cite{vandenhende2019branched} seek a tree-structured architecture that minimizes the sum of task-affinity scores across layers.

Another relevant direction is the design of mixture-of-experts models for multitask learning.
Such models comprise multiple expert subnetworks together with a gating mechanism that produces weighted combinations of expert outputs.
Examples include task-specific gating functions \cite{ma2018modeling} and differentiable sparse gating \cite{hazimeh2021dselect}.
A complementary line of work studies multitask learning from a regularization perspective \cite{wu2020understanding,yang2025precise}.
Recent analyses investigate how implicit and explicit regularization influence task interference and representation sharing \cite{liidentification,li2025efficient}, and apply gradient-based task features to supervised fine-tuning \cite{li2025efficient} and in-context learning \cite{zhang2025linear}, highlighting the role of gradient geometry in understanding task-relatedness.
Our work establishes a first connection between this body of literature and the problem of multitask algorithmic reasoning.

We now discuss recent studies evaluating the reasoning capabilities of LLMs on graph problems. GraphQA \cite{fatemitalk} systematically evaluates prompting strategies for encoding graphs (e.g., adjacency or incidence matrices) across twelve algorithms and retrieval tasks, finding that even large models such as PaLM-62B can underperform simple majority-vote baselines. 
NLGraph \cite{wang2023can} introduces eight graph reasoning tasks and shows that GPT-3.5 succeeds on problems like graph connectivity and shortest paths but struggles with more complex computations, such as Hamiltonian paths. GraphWiz \cite{chen2024graphwiz} applies instruction tuning with explicit reasoning traces, while GraphInstruct \cite{luo2024graphinstruct} develops instruction-tuned datasets over twenty-one graph reasoning tasks with detailed intermediate steps.  
It is worth clarifying that existing work focuses on \emph{predicting the final output of the algorithmic task}.
By contrast, we focus on \emph{models capable of generating the intermediate steps}.
Our fine-tuning experiments with Llama on GraphQA versus CLRS-Text further indicate that producing intermediate states is substantially more challenging---a gap that warrants further exploration.
CLRS-Text \cite{markeeva2024clrs} reformulates CLRS tasks as natural-language descriptions, showing that transformers can be trained to mimic algorithm execution, though they typically require significantly more training data than GNNs.
\citet{bounsi2024transformers} introduce a multimodal architecture that jointly pretrains transformers and GNNs using cross-attention to integrate their representations.
GraphArena \cite{tang2024grapharena} proposes broader benchmarks evaluating feasibility, hallucination, and other dimensions of graph-reasoning behavior.

Various techniques have been proposed to enhance LLM performance on graph-related problems. 
One class of approaches develops multimodal architectures that integrate GNNs with LLMs.
GreaseLM \cite{zhang2022greaselm} augments LLMs with knowledge-graph signals through a fusion layer that combines textual and graph-based representations.
GNP \cite{tian2024graph} incorporates a GNN module to encode knowledge graphs as embeddings within LLM prompts. %
Other lines of work explore code-generation pipelines (e.g., fine-tuning code LMs in graph coders), specialized tool-calling instructions (GraphTool-Instruction), and LLM-based multi-agent frameworks. 
Our work is complementary to these works and focuses on multitask algorithmic reasoning, with an emphasis on predicting not only final answers but also intermediate algorithmic states.
An interesting direction is to provide better explanations of GNNs for algorithmic reasoning, for instance by measuring the topology of complex predictions \cite{liu2023topological}, as well as to study broader algorithmic tasks including randomized computation \cite{tong2006fast,andersen2007local,kloster2014heat,zhang2016approximate}.

Besides algorithmic reasoning, latent multi-hop reasoning has also been studied to evaluate large language models in factual information retrieval \cite{yang2024large}. This line of work examines how models internally retrieve and utilize intermediate factual knowledge stored in their parameters to answer a question when such information is not explicitly provided in the prompt. Prior work \cite{biran2024hopping} identifies a mechanism for latent two-hop reasoning in which the first hop is resolved in earlier layers through identifying the intermediate answer, which then propagates to later layers to resolve the second hop. An interpretability method has been proposed to analyze failures in latent multi-hop reasoning by tracing how logits propagate across layers and positions \cite{yu2025back}. This analysis shows that errors can arise from conflicts among entity logits extracted in higher layers. In contrast, our work focuses on algorithmic reasoning, where models explicitly generate intermediate steps. It is intriguing to study whether branching neural networks can capture reasoning over multiple factual knowledge.

\section{Conclusion}

This work introduces a branching network capable of simultaneously solving multiple algorithmic reasoning tasks. We design an algorithm that automatically learns an optimal branching structure to capture the varying similarities between tasks. This algorithm efficiently finds the structure, enabled by a fast sub-procedure that determines the task partition at every layer. 
Across various algorithmic reasoning tasks and model architectures, our method outperforms existing multitask and branching networks, while achieving an optimal trade-off between runtime and model memory usage. 
Overall, these findings highlight the effectiveness of branching architectures for multitask algorithmic reasoning.

\section*{Acknowledgments}

Thanks to Jonathan Ullman,  David Gleich, Stratis Ioannidis, and Virgil Pavlu for several discussions about this work.
Thanks to the anonymous referees for their feedback.
The work of D. Li, Z. Zhang, M. Duan, and H. R. Zhang is partially funded by NSF award IIS-2412008.
D. Li is also partially funded by a PhD fellowship from JPMorgan Chase \& Co.

\begin{refcontext}[sorting=nyt]
\printbibliography

@article{chen2024graphwiz,
  title={GraphWiz: An Instruction-Following Language Model for Graph Problems},
  author={Chen, Nuo and Li, Yuhan and Tang, Jianheng and Li, Jia},
  journal={KDD},
  year={2024}
}

@article{luo2024graphinstruct,
  title={Graphinstruct: Empowering large language models with graph understanding and reasoning capability},
  author={Luo, Zihan and Song, Xiran and Huang, Hong and Lian, Jianxun and Zhang, Chenhao and Jiang, Jinqi and Xie, Xing},
  journal={arXiv preprint arXiv:2403.04483},
  year={2024}
}

@article{tang2024grapharena,
  title={Grapharena: Evaluating and exploring large language models on graph computation},
  author={Tang, Jianheng and Zhang, Qifan and Li, Yuhan and Chen, Nuo and Li, Jia},
  journal={International Conference on Learning Representations},
  year={2024}
}

@inproceedings{velivckovic2018graph,
  title={Graph Attention Networks},
  author={Veli{\v{c}}kovi{\'c}, Petar and Cucurull, Guillem and Casanova, Arantxa and Romero, Adriana and Li{\`o}, Pietro and Bengio, Yoshua},
  booktitle={International Conference on Learning Representations},
  year={2018}
}

@article{markeeva2024clrs,
  title={The CLRS-Text Algorithmic Reasoning Language Benchmark},
  author={Markeeva, Larisa and McLeish, Sean and Ibarz, Borja and Bounsi, Wilfried and Kozlova, Olga and Vitvitskyi, Alex and Blundell, Charles and Goldstein, Tom and Schwarzschild, Avi and Veli{\v{c}}kovi{\'c}, Petar},
  journal={arXiv preprint arXiv:2406.04229},
  year={2024}
}

@article{hazimeh2021dselect,
  title={Dselect-k: Differentiable selection in the mixture of experts with applications to multi-task learning},
  author={Hazimeh, Hussein and Zhao, Zhe and Chowdhery, Aakanksha and Sathiamoorthy, Maheswaran and Chen, Yihua and Mazumder, Rahul and Hong, Lichan and Chi, Ed},
  journal={NeurIPS},
  year={2021}
}

@article{wu2020understanding,
  title={Understanding and improving information transfer in multi-task learning},
  author={Wu, Sen and Zhang, Hongyang R and R{\'e}, Christopher},
  journal={ICLR},
  year={2020}
}

@article{vandenhende2019branched,
  title={Branched multi-task networks: Deciding what layers to share},
  author={Vandenhende, Simon and Georgoulis, Stamatios and De Brabandere, Bert and Van Gool, Luc},
  journal={BMVC},
  year={2020},
  publisher={BMVC}
}

@inproceedings{lu2017fully,
  title={Fully-adaptive feature sharing in multi-task networks with applications in person attribute classification},
  author={Lu, Yongxi and Kumar, Abhishek and Zhai, Shuangfei and Cheng, Yu and Javidi, Tara and Feris, Rogerio},
  booktitle={CVPR},
  year={2017}
}

@inproceedings{ma2018modeling,
  title={Modeling task relationships in multi-task learning with multi-gate mixture-of-experts},
  author={Ma, Jiaqi and Zhao, Zhe and Yi, Xinyang and Chen, Jilin and Hong, Lichan and Chi, Ed H},
  booktitle={KDD},
  year={2018}
}

@inproceedings{guo2020learning,
  title={Learning to branch for multi-task learning},
  author={Guo, Pengsheng and Lee, Chen-Yu and Ulbricht, Daniel},
  booktitle={ICML},
  year={2020},
  organization={PMLR}
}

@inproceedings{ibarz2022generalist,
  title={A generalist neural algorithmic learner},
  author={Ibarz, Borja and Kurin, Vitaly and Papamakarios, George and Nikiforou, Kyriacos and Bennani, Mehdi and Csord{\'a}s, R{\'o}bert and Dudzik, Andrew Joseph and Bo{\v{s}}njak, Matko and Vitvitskyi, Alex and Rubanova, Yulia and Deac, Andreea and Bevilacqua, Beatrice and Ganin, Yaroslav and Blundell, Charles and Veličković, Petar},
  booktitle={Learning on graphs conference},
  year={2022}
}

@inproceedings{li2024scalable_kdd,
  title={Scalable Multitask Learning Using Gradient-based Estimation of Task Affinity},
  author={Li, Dongyue and Sharma, Aneesh and Zhang, Hongyang R},
  booktitle={KDD},
  year={2024}
}

@inproceedings{fatemitalk,
  title={Talk like a Graph: Encoding Graphs for Large Language Models},
  author={Fatemi, Bahare and Halcrow, Jonathan and Perozzi, Bryan},
  booktitle={ICLR},
  year={2023}
}

@article{wang2023can,
  title={Can Language Models Solve Graph Problems in Natural Language?},
  author={Wang, Heng and Feng, Shangbin and He, Tianxing and Tan, Zhaoxuan and Han, Xiaochuang and Tsvetkov, Yulia},
  journal={NeurIPS},
  year={2023}
}

@inproceedings{velivckovic2022clrs,
  title={The CLRS algorithmic reasoning benchmark},
  author={Veli{\v{c}}kovi{\'c}, Petar and Badia, Adri{\`a} Puigdom{\`e}nech and Budden, David and Pascanu, Razvan and Banino, Andrea and Dashevskiy, Misha and Hadsell, Raia and Blundell, Charles},
  booktitle={ICML},
  year={2022},
}

@article{muller2024towards,
  title={Towards Principled Graph Transformers},
  author={M{\"u}ller, Luis and Kusuma, Daniel and Bonet, Blai and Morris, Christopher},
  journal={NeurIPS},
  year={2024}
}

@article{bounsi2024transformers,
  title={Transformers meet Neural Algorithmic Reasoners},
  author={Bounsi, Wilfried and Ibarz, Borja and Dudzik, Andrew and Hamrick, Jessica B and Markeeva, Larisa and Vitvitskyi, Alex and Pascanu, Razvan and Veli{\v{c}}kovi{\'c}, Petar},
  journal={arXiv preprint arXiv:2406.09308},
  year={2024}
}

@article{velivckovic2019neural,
  title={Neural execution of graph algorithms},
  author={Veli{\v{c}}kovi{\'c}, Petar and Ying, Rex and Padovano, Matilde and Hadsell, Raia and Blundell, Charles},
  journal={ICLR},
  year={2020}
}

@inproceedings{tian2024graph,
  title={Graph neural prompting with large language models},
  author={Tian, Yijun and Song, Huan and Wang, Zichen and Wang, Haozhu and Hu, Ziqing and Wang, Fang and Chawla, Nitesh V and Xu, Panpan},
  booktitle={AAAI},
  year={2024}
}

@article{hu2021lora,
  title={Lora: Low-rank adaptation of large language models},
  author={Hu, Edward J and Shen, Yelong and Wallis, Phillip and Allen-Zhu, Zeyuan and Li, Yuanzhi and Wang, Shean and Wang, Lu and Chen, Weizhu},
  journal={ICLR},
  year={2021}
}

@article{park2023trak,
  title={Trak: Attributing model behavior at scale},
  author={Park, Sung Min and Georgiev, Kristian and Ilyas, Andrew and Leclerc, Guillaume and Madry, Aleksander},
  journal={ICML},
  year={2023}
}

@article{zhang2022greaselm,
  title={Greaselm: Graph reasoning enhanced language models for question answering},
  author={Zhang, Xikun and Bosselut, Antoine and Yasunaga, Michihiro and Ren, Hongyu and Liang, Percy and Manning, Christopher D and Leskovec, Jure},
  journal={ICLR},
  year={2022}
}

@inproceedings{li2023boosting,
  title={Boosting multitask learning on graphs through higher-order task affinities},
  author={Li, Dongyue and Ju, Haotian and Sharma, Aneesh and Zhang, Hongyang R},
  booktitle={KDD},
  year={2023}
}

@inproceedings{yang2013overlapping,
  title={Overlapping community detection at scale: a nonnegative matrix factorization approach},
  author={Yang, Jaewon and Leskovec, Jure},
  booktitle={WSDM},
  year={2013}
}

@book{cormen2022introduction,
  title={Introduction to algorithms},
  author={Cormen, Thomas H and Leiserson, Charles E and Rivest, Ronald L and Stein, Clifford},
  year={2022},
  publisher={MIT press}
}

@article{frasca2020sign,
  title={Sign: Scalable inception graph neural networks},
  author={Frasca, Fabrizio and Rossi, Emanuele and Eynard, Davide and Chamberlain, Ben and Bronstein, Michael and Monti, Federico},
  journal={arXiv preprint arXiv:2004.11198},
  year={2020}
}

@article{fifty2021efficiently,
  title={Efficiently identifying task groupings for multi-task learning},
  author={Fifty, Chris and Amid, Ehsan and Zhao, Zhe and Yu, Tianhe and Anil, Rohan and Finn, Chelsea},
  journal={Advances in Neural Information Processing Systems},
  volume={34},
  pages={27503--27516},
  year={2021}
}

@article{jacot2018neural,
  title={Neural tangent kernel: Convergence and generalization in neural networks},
  author={Jacot, Arthur and Gabriel, Franck and Hongler, Cl{\'e}ment},
  journal={Advances in neural information processing systems},
  volume={31},
  year={2018}
}

@inproceedings{li2024scalable,
  title={Scalable Fine-tuning from Multiple Data Sources: A First-Order Approximation Approach},
  author={Li, Dongyue and Zhang, Ziniu and Wang, Lu and Zhang, Hongyang},
  booktitle={Findings of the Association for Computational Linguistics: EMNLP 2024},
  pages={5608--5623},
  year={2024}
}

@article{velivckovic2021neural,
  title={Neural algorithmic reasoning},
  author={Veli{\v{c}}kovi{\'c}, Petar and Blundell, Charles},
  journal={Patterns},
  volume={2},
  number={7},
  year={2021},
  publisher={Elsevier}
}

@inproceedings{georgiev2023beyond,
  title={Beyond erdos-renyi: Generalization in algorithmic reasoning on graphs},
  author={Georgiev, Dobrik Georgiev and Lio, Pietro and Bachurski, Jakub and Chen, Junhua and Shi, Tunan and Giusti, Lorenzo},
  booktitle={The Second Learning on Graphs Conference},
  year={2023}
}

@article{russell1995modern,
  title={Artificial Intelligence: A modern approach},
  author={Russell, Stuart and Norvig, Peter},
  journal={Artificial Intelligence. Prentice-Hall, Egnlewood Cliffs},
  volume={25},
  number={27},
  pages={79--80},
  year={1995}
}

@article{feigenbaum1963computers,
  title={Computers and thought.},
  author={Feigenbaum, Edward A and Feldman, Julian},
  year={1963},
  publisher={McGraw-Hill}
}

@article{ilyas2022datamodels,
  title={Datamodels: Predicting predictions from training data},
  author={Ilyas, Andrew and Park, Sung Min and Engstrom, Logan and Leclerc, Guillaume and Madry, Aleksander},
  journal={International conference on machine learning},
  year={2022}
}

@inproceedings{
luca2024simulation,
title={Simulation of Graph Algorithms with Looped Transformers},
author={Artur Back de Luca and Kimon Fountoulakis},
booktitle={International Conference on Machine Learning},
year={2024}
}

@inproceedings{johnson1984extensions,
  title={Extensions of Lipshitz mapping into Hilbert space},
  author={Johnson, William B},
  booktitle={Conference modern analysis and probability, 1984},
  pages={189--206},
  year={1984}
}

@article{yang2025precise,
  title={Precise high-dimensional asymptotics for quantifying heterogeneous transfers},
  author={Yang, Fan and Zhang, Hongyang R and Wu, Sen and R{\'e}, Christopher and Su, Weijie J},
  journal={Journal of Machine Learning Research},
  volume={26},
  number={113},
  pages={1--88},
  year={2025}
}

@article{liidentification,
  title={Identification of Negative Transfers in Multitask Learning Using Surrogate Models},
  year={2023},
  author={Li, Dongyue and Nguyen, Huy and Zhang, Hongyang Ryan},
  journal={Transactions on Machine Learning Research}
}

@inproceedings{zhang2025linear,
  title={Linear-Time Demonstration Selection for In-Context Learning via Gradient Estimation},
  author={Zhang, Ziniu and Zhang, Zhenshuo and Li, Dongyue and Wang, Lu and Dy, Jennifer and Zhang, Hongyang R},
  booktitle={Proceedings of the 2025 Conference on Empirical Methods in Natural Language Processing},
  pages={16470--16488},
  year={2025}
}

@article{li2025efficient,
  title={Efficient Ensemble for Fine-tuning Language Models on Multiple Datasets},
  author={Li, Dongyue and Zhang, Ziniu and Wang, Lu and Zhang, Hongyang R},
  journal={ACL},
  year={2025}
}

@article{whang2016overlapping,
  title={Overlapping community detection using neighborhood-inflated seed expansion},
  author={Whang, Joyce Jiyoung and Gleich, David F and Dhillon, Inderjit S},
  journal={IEEE Transactions on Knowledge and Data Engineering},
  volume={28},
  number={5},
  pages={1272--1284},
  year={2016},
  publisher={IEEE}
}

@article{liu2023topological,
  title={Topological structure of complex predictions},
  author={Liu, Meng and Dey, Tamal K and Gleich, David F},
  journal={Nature Machine Intelligence},
  volume={5},
  number={12},
  pages={1382--1389},
  year={2023},
  publisher={Nature Publishing Group UK London}
}

@inproceedings{kloster2014heat,
  title={Heat kernel based community detection},
  author={Kloster, Kyle and Gleich, David F},
  booktitle={Proceedings of the 20th ACM SIGKDD international conference on Knowledge discovery and data mining},
  pages={1386--1395},
  year={2014}
}

@inproceedings{tong2006fast,
  title={Fast random walk with restart and its applications},
  author={Tong, Hanghang and Faloutsos, Christos and Pan, Jia-Yu},
  booktitle={Sixth international conference on data mining (ICDM'06)},
  pages={613--622},
  year={2006},
  organization={IEEE}
}

@inproceedings{andersen2007local,
  title={Local computation of pagerank contributions},
  author={Andersen, Reid and Borgs, Christian and Chayes, Jennifer and Hopcraft, John and Mirrokni, Vahab S and Teng, Shang-Hua},
  booktitle={International Workshop on Algorithms and Models for the Web-Graph},
  pages={150--165},
  year={2007},
  organization={Springer}
}

@inproceedings{zhang2016approximate,
  title={Approximate personalized pagerank on dynamic graphs},
  author={Zhang, Hongyang and Lofgren, Peter and Goel, Ashish},
  booktitle={Proceedings of the 22nd ACM SIGKDD international conference on knowledge discovery and data mining},
  pages={1315--1324},
  year={2016}
}

@inproceedings{yang2024large,
  title={Do large language models latently perform multi-hop reasoning?},
  author={Yang, Sohee and Gribovskaya, Elena and Kassner, Nora and Geva, Mor and Riedel, Sebastian},
  booktitle={Annual Meeting of the Association for Computational Linguistics (ACL)},
  pages={10210--10229},
  year={2024}
}

@inproceedings{yu2025back,
  title={Back attention: Understanding and enhancing multi-hop reasoning in large language models},
  author={Yu, Zeping and Belinkov, Yonatan and Ananiadou, Sophia},
  booktitle={Empirical Methods in Natural Language Processing (EMNLP)},
  pages={11268--11283},
  year={2025}
}

@inproceedings{biran2024hopping,
  title={Hopping too late: Exploring the limitations of large language models on multi-hop queries},
  author={Biran, Eden and Gottesman, Daniela and Yang, Sohee and Geva, Mor and Globerson, Amir},
  booktitle={Empirical Methods in Natural Language Processing (EMNLP)},
  pages={14113--14130},
  year={2024}
}
\end{refcontext}

\appendix
\onecolumn
\section{Proof of Proposition \ref{prop_error_bound}}

We provide a proof for the Proposition \ref{prop_error_bound} using the logistic loss in binary classification. We note that the extension to multiclass classification loss is straightforward using the same technique, which requires additional notations. 

\begin{proof}[Proof of Proposition \ref{prop_error_bound}]
Recall that the estimated weight $\hat{W}_S$ is obtained from the minimizer of the logistic regression using the projected gradients as the features. 
To make it clear, we annotate the vector with its dimension so that it is easy to distinguish. 
Let $\hat{W}_d$ denote the minimizer of the logistic regression in dimension $d$. We have $\hat{W}_S = P \hat {W}_d +W^{(0)}$ given a $p$ by $d$ random projection matrix $P$ and the weight initialization $W^{(0)}$. 

Specifically, $\hat W_d$ is the minimizer of the following problem:
\begin{align*}
    h_1(W) = \frac{1}{\abs{T}\abs{S}} \sum_{X \in T} \sum_{i \in S} \frac{1}{S^{(i)}(X)} \sum_{j=1}^{S^{(i)}(X)} \log \Big( 1 + \exp \Big( - A_j^{(i)}(X) \big[g_j^{(i)}\big]^\top PW  + b_j^{(i)} \Big) \Big),
\end{align*}
for $W \in \real^{d}$, where $g_j^{(i)} = \nabla f_{W^{(0)}}^{(i)}(X; j),  b_j^{(i)} = -A_j^{(i)}(X) f_{W^{(0)}}^{(i)}(X; j).$

To relate this loss with the training loss $\hat{L}_S(W)$, we define an intermediate solution $\bar{W}_p$ in dimension $p$ for the following problem: 
\begin{align*}
    h_2(W) = \frac{1}{\abs{T}\abs{S}} \sum_{X \in T} \sum_{i \in S} \frac{1}{S^{(i)}(X)} \sum_{j=1}^{S^{(i)}(X)} \log \Big( 1 + \exp \Big( - A_j^{(i)}(X) \big[g_j^{(i)}\big]^\top P P^{\top} (W-W^{(0)}) \big) + b_j^{(i)} \Big) \Big).
\end{align*}

We can know that $h_1(\hat W_d) \leq h_2(\bar{W}_p)$, since the second problem is a specific case of the first one. 
Denote $W^{\star}$ as the minimizer for the training loss for $W \in \real^{p}$: 
\begin{align*}
    \min \hat{L}_S(W) = \frac{1}{\abs{T}\abs{S}}\sum_{X \in T} \sum_{i\in S} \frac{1}{S^{(i)}(X)} \sum_{j=1}^{S^{(i)}(X)  } \log \Big( 1 + \exp \Big( - A_j^{(i)}(X) f_W^{(i)}(X; j) \Big)  \Big).
\end{align*}

We will complete the proof by showing that the following two bounds hold. First, we will show that the error between $h_1(\hat W_d)$ and $\hat{L}_S(W^{\star})$ is bounded:
\begin{align}\label{eq_bound_1}
    h_1(\hat W_d) \leq h_2(\bar W_p) \leq h_2(W^{\star}) \leq \hat{L}_S(W^\star) + \delta + 2GD\epsilon. 
\end{align}
Second, we will show that the error between $h_1(\hat W_d)$ and $\hat{L}_S(P \hat W_d + W^{(0)})$ is also bounded:
\begin{align}\label{eq_bound_2}
 \Big \lvert h_1(\hat W_d) - \hat{L}_S(P \hat W_d + W^{(0)}) \Big \rvert \leq \delta.
\end{align}

Next, we prove the first bound in Equation \eqref{eq_bound_1}. We have known that $h_1(\hat W_d) \leq h_2(\bar{W}_p)$ and $h_2(\bar{W}_p) \leq h_2(W^\star)$, as $\bar{W}_p$ is minimizer of $\min h_2(W)$. We will bound the error between $h_2(W^\star)$ and $\hat{L}_S(W^\star)$. 
Let's expand the training loss at $W^\star$. To simplify the notations, we write down one logistic loss:
\begin{align*}
    & \log \Big( 1 + \exp \Big( - A_j^{(i)}(X) f_{W^\star}^{(i)}(X; j) \Big)  \Big) \\
    = & \log \Big( 1 + \exp \Big( - A_j^{(i)}(X) \big[g_j^{(i)}\big]^\top (W^\star - W^{(0)})  + b_j^{(i)} + \bar \epsilon_j^{(i)} \Big) \Big) \\
    = & \log \Big( 1 + \exp \Big( - A_j^{(i)}(X) \big[g_j^{(i)}\big]^\top P P^{\top} (W^\star - W^{(0)})  + b_j^{(i)} + \bar \epsilon_j^{(i)} + \tilde{\epsilon}_j^{(i)} \Big) \Big),
\end{align*}
where $\bar \epsilon_j^{(i)} = -A_j^{(i)}(X) \epsilon_{X, j}^{(i)}$ involves the Taylor's expansion error and \[ \tilde \epsilon_j^{(i)} = \big[g_j^{(i)}\big]^\top(\id_d - PP^{\top}) (W^\star - W^{(0)}). \]
Then, we leverage the fact that the logistic loss is $1$-Lipschitz continuous. Based on the mean value theorem, we can have that: 
\begin{align*}
    \log(1+\exp(-x + \epsilon)) \leq \log(1+\exp(-x)) + \abs{\epsilon}. 
\end{align*}  
Therefore, we can bound the error between $h_2(W^\star)$ and $\hat{L}_S(W^\star)$ as follows: 
\begin{align*}
    h_2(W^\star) \leq \hat{L}_S(W^\star) + \frac{1}{\abs{T}\abs{S}} \sum_{X\in T}\sum_{i \in S} \frac{1}{S^{(i)}(X)} \sum_{j=1}^{S^{(i)}(X)} (\abs{\bar \epsilon_j^{(i)}} + \abs{\tilde \epsilon_j^{(i)}}).
\end{align*}
From the assumption that the averaged Taylor’s expansion error is at most $\delta$, we have:
\begin{align*}
        \frac{1}{\abs{T}\abs{S}} \sum_{X\in T}\sum_{i \in S} \frac{1}{S^{(i)}(X)} \sum_{j=1}^{S^{(i)}(X)} \abs{\bar \epsilon_j^{(i)}} \leq \delta.
\end{align*}
We then bound the second error term by the Johnson-Lindenstrauss Lemma \cite{johnson1984extensions}. Provided that $d = O\big(\frac{\log p}{\epsilon^2}\big)$, we have
    \begin{align*}
        \abs{\inner{g_i}{W^\star  - W^{(0)}} - \inner{P g_i}{P (W^\star  - W^{(0)})}} 
        \le \epsilon \bigabs{\inner{g_i}{W^\star  - W^{(0)}}}
        \le {2 G D }{\epsilon}.
    \end{align*} 
Therefore, we have proved the first bound in equation \eqref{eq_bound_1}: 
\begin{align*}
    h_2(W^\star) \leq \hat{L}_S(W^\star) + \frac{1}{\abs{T}\abs{S}} \sum_{X\in T}\sum_{i \in S} \frac{1}{S^{(i)}(X)} \sum_{j=1}^{S^{(i)}(X)} (\abs{\bar \epsilon_j^{(i)}} + \abs{\tilde \epsilon_j^{(i)}}) \leq \delta + 2GD\epsilon.
\end{align*}
Next, we prove the second bound in Equation \eqref{eq_bound_2} using the same idea. This is based on observing the training loss on $P\hat W_d + W^{(0)}$: 
\begin{align*}
    & \log \Big( 1 + \exp \Big( - A_j^{(i)}(X) f_{P\hat W_d + W^{(0)}}^{(i)}(X; j) \Big)  \Big) \\
    = & \log \Big( 1 + \exp \Big( - A_j^{(i)}(X) \big[g_j^{(i)}\big]^\top P\hat W_d  + b_j^{(i)} + \bar \epsilon_j^{(i)} \Big) \Big),
\end{align*}
where $\bar \epsilon_j^{(i)}$ involves the Taylor expansion error.

Using the $1$-Lipschitz continuous property of the logistic loss again, we can have that 
\begin{align*}
    \Big \lvert h_1(\hat W_d) - \hat{L}_S(P \hat W_d + W^{(0)}) \Big \rvert 
    \leq \frac{1}{\abs{T}\abs{S}} \sum_{X\in T}\sum_{i \in S} \frac{1}{S^{(i)}(X)} \sum_{j=1}^{S^{(i)}(X)} \abs{\bar \epsilon_j^{(i)}}
    \leq \delta.
\end{align*}
We have now completed the proof. 
\end{proof}

\section{Detailed Approach}\label{sec_details_of_approach}

We list the notations we use in the paper as follows.
\begin{table}[h]
\caption{A list of mathematical notations used in the paper.}
\centering
\begin{tabular}{l l}
\toprule
Notation           & Meaning                                           \\ \midrule
${A}^{(i)}$      & The $i$-th algorithm                                  \\
$X$                & Input, including the graph structure                                \\
$A_j^{(i)}$ & $j$-th intermediate step of the $i$-th algorithm          \\
$S^{(i)}(X)$             & Total number of steps of applying the $i$-th algorithm on input $X$                            \\
$S$        & Subset of tasks \\
$f_W$ & A neural network with parameters $W$ \\
$f_W^{(i)}(X; j)$       & The prediction of the step $j$ of the $i$-th algorithm     \\
$g_j^{(i)}$ & The gradient of $f_W^{(i)}(X; j)$ with respect to the network parameter $W$\\
$T^{(i)}_{j, l}$    & Layer-wise task affinity between task $i$ and $j$ at layer $l$ \\
\bottomrule
\end{tabular}
\end{table}

\subsection{Clustering Algorithms}\label{app_clustering}

We provide further details of our fast approximation algorithm, which uses a first-order expansion of the network output and logistic regression on gradient-based features, combined with a dimension reduction step.
In the following discussion, we focus on binary classification, such that $A_j^{(i)}(X) \in \set{+1, -1}$.
Recall the gradient-based approximation of $f_W^{(i)}(X, j)$, given the input $X$ and intermediate step $j$: 
\begin{align}
    f_W^{(i)}(X, j)  \approx& f_{W^{(0)}}^{(i)}(X, j) + \big[ \nabla_{W_{l : L}}  f_{W^{(0)}}^{(i)}(X, j) \big ]^{\top} \big(W_{l : L} - W^{(0)}_{l : L} \big) + \epsilon^{(i)}_{X,j}.
\end{align} 
Let us denote $\nabla_{W_{l : L}}  f_{W^{(0)}}^{(i)}(X, j)$ as $g_j^{(i)}$ and $- A_j^{(i)}(X) f_{W^{(0)}}^{(i)}(X, j)$ as $b_j^{(i)}$, for any intermediate step $j$.
Using logistic loss, the approximate loss term for each sample is
\begin{align}
    \tilde\ell(f_W^{(i)}(X))= \log \Big( 1 + \exp \big( -A_j^{(i)}(X) g_j^{(i)\top} \big(W_{l : L} - W^{(0)}_{l : L} \big) + b_j^{(i)} \big) \Big),
\end{align}
for $W \in\real^p$.
Denote the combined data set in the task subset $S$ as $\cD_S$, 
where $n_S$ is the combined number of data samples in the set $\cD_S$.
The main idea is to solve a logistic regression problem with $g_j^{(i)}$ being the feature vector and $A_j^{(i)}(X)$ being the response label.

Since $g_j^{(i)}$ can be high-dimensional (on the order of the neural network's parameter count), we apply a Johnson-Lindenstrauss (JL) random projection to reduce dimension without losing much precision. Specifically, let $P\in\mathbb{R}^{p\times d}$ be a Gaussian random matrix, where each entry is drawn i.i.d. from $\cN(0, d^{-1})$.
We project the gradient from dimension $p$ onto dimension $d$ as
$\tilde g_j^{(i)} = P^{\top} g_j^{(i)}$.
Then, we solve the following $d$-dimensional logistic regression, which is now in dimension $d$, which aims to minimize the approximated logistic loss on all training samples for tasks in $S$: 
\begin{align*}
    \hat W_d \leftarrow \mathop{\arg \min}_{W\in\real^d}  \frac 1 {n_{S}} \sum_{X \in \cD_S} \sum_{i \in S} \frac{1}{S^{(i)}(X)} \sum_{j=1}^{S^{(i)}(X)} \log \Big( 1 + \exp \Big( - A_j^{(i)}(X) \big[g_j^{(i)}\big]^\top PW  + b_j^{(i)} \Big) \Big).
\end{align*}

Finally, we map the learned $\hat{W_d}\in\mathbb{R}^d$ back to the full $p$-dimensional space: $\hat W_{S}=P \hat W_d + W^{(0)}$. The $\hat W_{S}$ then is the estimation of the fine-tuned weights on a subset $S$.

\paragraph{Task Affinity based Clustering.} Let $T$ denote the task affinity matrix at a layer $l$. The goal is to cluster $n$ tasks into $k$ groups such that tasks with higher affinity are grouped together. 
Given the number of clusters $k$, let $v_1, \dots, v_k$ be binary indicator vectors indicating whether a task is in each cluster. The average density is computed as $\sum_{i=1}^k ({v_i^{\top} T v_i}/{v_i^{\top} v_i})$.
Specifically, define a new variable as $X = \sum_{i=1}^k ({v_i v_i^{\top}}/{v_i^{\top}v_i})$. As $v_iv_i^{\top}$ is a rank-one semi-definite matrix, the rank and trace of $X$ are equal to $k$.
This can be formulated as a rank-constrained maximization problem:
\begin{align*}
    \max_{X\in\real^{n\times n}}~~& \inner{T}{X} \\
        & X e = e, \tr[X] = k, \textup{rank}(X) = k \\
        & X \geq 0,  X \succeq 0.
\end{align*}

The integral objective is NP-hard in general. Therefore, we relax it using semi-definite programming (SDP), followed by a rounding step to obtain discrete clusters.

Since splitting tasks into $k$ clusters at layer $l$ increases the network size by $(L-l)k$, we apply the SDP relaxation with a regularization term on the number of clusters $k$ to control the network size.
The regularized SDP objective becomes:
\begin{align}\label{eq_clustering}
    \max_{X\in\real^{n\times n}}~~& \inner{T}{X} - \lambda (L-l) \tr[X] \\
        & X e = e, X \geq 0,  X \succeq 0 \notag
\end{align}
where $\lambda$ is a hyperparameter that controls how much we penalize the increase in network size.
Once we obtain the SDP solution $\hat X$, we round $\hat X$ into an integer solution using a threshold of $1/n$. 
Solving the SDP for an $n \times n$ matrix is computationally efficient, taking less than 5 seconds, making it negligible compared to the overall model training time.

\paragraph{Discussion.} 
Let the running time for training a single $L$-layer network on one task be $T$, with memory usage $B$. Our algorithm discovers the branching network in $nLT$ time and uses $kB/L$ memory.
By contrast, training a mixture of expert networks with $k$ networks takes $knT$ time and $kB$ memory. Task grouping based on fully computed task affinity takes $n^2T$ time and $kB$ memory. LearningToBranch \cite{guo2020learning} that searches for branching decisions takes $k^LnT$ time and uses $nB$ memory since it trains with $n$ modules per layer. 
One existing method \cite{lu2017fully} takes a layer-by-layer search approach from the last layer to the first. At each layer, their method determines the number of groups for the branching by optimizing a criterion over task-relatedness and model complexity.
In contrast, LearningToBranch \cite{guo2020learning} parameterizes the branching decisions at each layer and optimizes the variables in an end-to-end training pipeline. 
\citet{vandenhende2019branched} designs a method that exhaustively searches over all possible trees of $L$ layers to find the tree structure that has the minimum sum of task affinity scores over all layers.
These methods use a search space of $O(k^{n L})$. 
Other strategies have also been explored for tree construction. For example, one can still proceed from the last layer toward the first layer. Our algorithm can be applied generically to these different branching strategies, providing flexibility when building a branched multi-task network.

\subsection{Using Node Embeddings for Graph Algorithmic Reasoning}\label{sec_prompt_tuning}

Next, we use the embeddings learned from Algorithm \ref{alg_building_branching_network} to deal with text descriptions of a graph reasoning task.
To motivate our approach, we first evaluate whether existing open-source language models can solve the tasks accurately.
We use four CLRS-Text tasks, including BFS, Bellman-Ford, Dijkstra, and Prim's algorithm \cite{markeeva2024clrs}.
We use Llama-3-8B as the base language model.
The input involves a text description of a graph instance and the algorithmic task.
The output involves a text description of the intermediate steps and the final output.
The results are evaluated by comparing the outputs to the ground truth.

We find that there is a large gap between the LLM results and the GNN results.
We evaluate directly prompting the text description to the language model by showing $m$ demonstrations and also a query.
We also fine-tune Llama-3-8B with LoRA \cite{hu2021lora} based on the text description. 
For fine-tuning, we use the same number of training samples as in the MPNN. 
Table \ref{tab_graph_tasks} shows the results.
We find that directly prompting the Llama model performs poorly.
While fine-tuning helps, it still lags behind MPNNs by up to 23\%.

Can we combine the embeddings from the branching network along with a language model to enhance its graph reasoning performance?
A natural idea is to use the node embeddings of the input graph trained from \acronym{} and combine that with the embeddings of the text description.
To align these two embeddings, we add a linear layer after the node embeddings.
Then, these are used as the input to the language model for fine-tuning.
The overall procedure is summarized in Algorithm \ref{alg_prompt_tuning}.

Another benefit of using GNN embeddings is that they reduce the memory cost of LLMs by reducing the input lengths. 
Suppose a graph has $|V|$ nodes and $|E|$ edges, text-based descriptions of graphs use the flattened adjacency matrix (as in \cite{markeeva2024clrs}) and incidence matrix (as in \cite{fatemitalk}), 
which have a length of $|V|^2$ and $|V||E|$, respectively. 
As the model’s memory requirements typically scale with the square of these lengths, the overall memory usage becomes $O(|V|^4)$ and $O(|V|^2|E|^2)$.
In contrast, using GNN-produced node embeddings creates an input of length $|V|$, thereby reducing the memory requirement to $O(|V|^2)$. 

Other methods also exist for fusing the graph embeddings with the text embeddings.
For instance, one approach is to apply a cross-attention layer between the graph and text embeddings, and then feed the transformed embeddings to the LLM.
We find that training a cross-attention layer between the embeddings converges more slowly than directly training on the concatenated embeddings. Thus, we choose the simpler concatenation in our method.

\begin{algorithm}[t!]
    \caption{Fine-tuning LLMs with \acronym{} Embeddings}\label{alg_prompt_tuning}
    \raggedright
    \textbf{Input}: Training and validation datasets of $n$ algorithmic reasoning tasks in text formats, Pretrained language model $\text{LM}(\cdot)$ \\
    \textbf{Require:} A text instruction $P_{\text{instruction}}$,  Number $m$ of subsets and their size $\alpha$, Projection dimension $d$, Regularization parameter $\lambda$\\
    \textbf{Output}: A fine-tuned language model with an \acronym{} \\
    \begin{algorithmic}[1]
    \State Convert the text input descriptions into graphs for every task 
    \State Train an $L$-layer \acronym{} on the $n$ tasks with graph inputs, using Algorithm \ref{alg_building_branching_network} 
    \State For each training sample,
     concatenate the node embeddings of  \acronym{} with the text embeddings of $P_{\text{instruction}}$
    \State  Fine-tune the LM along with the \acronym{}, using the concatenated embeddings as input,
     and minimizing the training loss averaged over intermediate steps. 
    \end{algorithmic}
\end{algorithm}

\paragraph{Evaluation results.}

We observed that fine-tuning LMs on the text descriptions of tasks underperforms training GNNs. We now evaluate our algorithm that uses the embeddings of the GNN-based branching network to fine-tune LLMs. We use the text-based graph reasoning tasks from the CLRS-Text benchmark \cite{markeeva2024clrs}, training on 1,000 graphs and evaluating on validation and test sets, each containing 200 graphs. We use Llama-3-8B as the base model. For each task, we evaluate the accuracy between model outputs and the labels, averaged over intermediate steps 

We compare our algorithm with several parameter-efficient fine-tuning methods for fine-tuning Llama-3-8B. These include prefix tuning, adapter tuning, and LoRA fine-tuning. In our algorithm, we use LoRA to fine-tune the LM together with \acronym{}. For all baselines, we train the same number of parameters as in our algorithm.
Table \ref{tab_graph_tasks} shows the results. Our algorithm outperforms the other fine-tuning baselines by \textbf{2.1}\% on average. 

\begin{table}[t!]
\centering
\caption{We compare GNN training, prompt demonstration with $m$ examples, parameter-efficient fine-tuning, and our approach that leverages GNN embeddings in fine-tuning. For prompting and fine-tuning, we use Llama-3-8B. We evaluate the average test accuracy between outputs and labels at every intermediate step. We run each experiment with three random seeds and report the average.}\label{tab_graph_tasks}
 \begin{small}
{\begin{tabular}{@{}lcccccccccccc@{}}
\toprule
 & BFS & Bellman-Ford & Dijkstra & MST \\ \midrule
MPNN \cite{velivckovic2022clrs} & 99.80 & 99.40 & 99.20 & 99.60\\ \midrule  
Prompting, $m=0$ & 22.47 & 2.78 & 4.60 & 3.95\\
Prompting, $m=5$ & 34.28 & 31.54 & 25.17 & 25.31\\
Prompting, $m=10$ & 34.65 & 32.44 & 25.32 & 25.98\\
Prompting, $m=20$ & 35.88 & 33.56 & 25.80 & 26.59\\ \midrule
PrefixTuning & 92.1 $\pm$ 0.4 & 80.2 $\pm$ 0.6 & 65.3 $\pm$ 0.3 & 74.6 $\pm$ 0.8 \\
Adapter & 96.5 $\pm$ 0.6 & 83.1 $\pm$ 0.6 & 69.4 $\pm$ 0.4 & 74.3 $\pm$ 0.3 \\
LoRA & 98.2 $\pm$ 0.3 & 89.2 $\pm$ 0.2 & 75.6 $\pm$ 0.4 & 78.5 $\pm$ 0.5\\\midrule
Algorithm \ref{alg_prompt_tuning} (Ours) & \textbf{99.3 $\pm$ 0.2} & \textbf{92.6 $\pm$ 0.6} & \textbf{78.7 $\pm$ 0.6} & \textbf{79.4 $\pm$ 0.3}\\
\bottomrule
\end{tabular}}
 \end{small}
\end{table}

\section{Omitted Experiments} \label{sec_experiment_details}

\subsection{Datasets and Models}

\paragraph{CLRS Tasks.}
We summarize the input and output definitions of the algorithmic reasoning tasks in the CLRS benchmark \cite{velivckovic2022clrs} in Table~\ref{tab_clrs_tasks}. Each task represents an algorithmic problem where the input includes the graph structure itself.

\paragraph{GraphQA Tasks.} 
We use twelve text-based graph reasoning tasks from the GraphQA benchmark \cite{fatemitalk}. We summarize the input and output definitions of the tasks in Table \ref{tab_graphqa_tasks}. While not all tasks involve an algorithmic problem, we evaluate on this benchmark to demonstrate the broad applicability of our algorithm. 

\paragraph{GraphWiz Tasks.} We use nine graph-related tasks from GraphWiz \cite{chen2024graphwiz}. Compared to other datasets, this dataset provides textual descriptions of graph problem inputs and the reasoning processes that describe the rationale for solving the tasks generated by chain-of-thought prompting GPT-4. We summarize the input and output definitions of the tasks in Table \ref{tab_graphwiz_tasks}. 

\paragraph{Implementation.}  On CLRS, we use the encode-process-decode architecture introduced in \citet{velivckovic2022clrs}.
The encoder embeds inputs into feature representations of the algorithm. 
The processor performs message-passing computations of the input embeddings over the graph using GNNs.
The decoder is a linear layer that transforms the final embeddings into a label space. The output of the decoder is viewed as the predicted steps for the next step and as the output at the final step. 
We train the encode-process-decode model with the sum of the losses computed on each step. 
At each algorithm step $t$, the encoder generates embeddings for the nodes, edges, and graph-level features. The processor updates node embeddings, incorporating both the initial input features and the embeddings from previous steps. Finally, the decoder predicts intermediate steps or the final output step. The entire model is trained to optimize predictions across all steps.

\begin{table*}[t!]
\centering
\caption{List of hyperparameters for each dataset tested in the experiments, corresponding to the results that we reported in Section \ref{sec_experiments}.}\label{tab_hyper_params}
{\small
\begin{tabular}{@{}lccccccccc@{}}
\toprule
Dataset &  Model & \makecell{$m$} &  $\alpha$ & $L$ & $k$ & $T$ & $B$ & Learning rate & Epochs \\ \midrule
CLRS & Edge Transformer & 200 & 3 & 5 & 10 & 2.5 hours & 10.5 GB & 2$e^{-4}$ & 8 \\ 
CLRS  & MPNN & 200 & 3 & 5 & 10 & 0.3 hours & 4.2 GB & 2$e^{-4}$ & 8 \\ 
CLRS-Text & Qwen-3-1.7B & 200 & 3 & 28 & 7 & 6.0 hours & 13.6 GB & 2$e^{-5}$ & 10\\
GraphQA & Llama-3-1B & 200 & 3 & 16 & 8 & 0.5 hours & 12.4 GB & 1$e^{-5}$ & 10 \\
GraphWiz & Llama-3-1B & 200 & 3 & 16 & 8 & 0.9 hours & 21.5 GB & 1$e^{-5}$ & 10 \\
Orkut & SIGN & 5000 & 25 & 3 & 22 & 0.1 hours & 5.7 GB & 1$e^{-2}$ & 100\\
\bottomrule
\end{tabular}
}
\end{table*}

\begin{table*}[t!]
\centering
\caption{Definition of input and output of 12 graph reasoning tasks from GraphQA \cite{fatemitalk}. The input graphs are sampled from an Erd\"os-R\'enyi random graph distribution with edge sampling probability $p = 0.5$, with 20 vertices in total on each graph. The edge weights are uniformly sampled from the integers between $1$ and $10$. The source node is sampled randomly from the vertex set.
}\label{tab_graphqa_tasks}
\resizebox{\textwidth}{!}
{
{
\begin{tabular}{@{} p{4cm} p{5cm} p{5.5cm} p{6.5cm} @{}}
\toprule
Task & Input Graph & Additional Input & Output \\ 
\midrule
Edge existence & Undirected & Two target nodes & Binary label for edge existence \\ 
Node degree & Undirected & Target node & Scalar for node degree \\ 
Node count & Undirected & None & Scalar for number of nodes \\ 
Edge count & Undirected & None & Scalar for number of edges \\ 
Connected nodes & Undirected & Target node & List of connected nodes \\ 
Cycle check & Undirected & None & Binary label for cycle presence \\ 
Disconnected nodes & Undirected & Target node & List of disconnected nodes \\ 
Reachability & Undirected & Source and target nodes & Binary label for path existence \\ 
Shortest path & Undirected, weighted & Source and target nodes &  Scalar for shortest path length \\ 
Maximum flow & Undirected, weighted & Source and target nodes &  Scalar for maximum flow capacity \\ 
Triangle counting & Undirected & None &  Scalar for number of triangles \\ 
Node classification & Undirected & Target node and other node labels & Node label of the target node \\
\bottomrule
\end{tabular}
}
}
\end{table*}

\begin{table*}[t!]
\centering
\caption{Definition of input and output of 9 graph reasoning tasks from GraphWiz \cite{chen2024graphwiz}. The input graphs are sampled from an Erd\"os-R\'enyi random graph distribution with edge sampling probability $p = 0.5$, with 100 vertices in total on each graph. The edge weights are uniformly sampled from the integers between $1$ and $10$. The source node is sampled randomly from the vertex set.}\label{tab_graphwiz_tasks}
\resizebox{\textwidth}{!}
{
{
\begin{tabular}{@{} p{4cm} p{5cm} p{5.5cm} p{6.5cm} @{}}
\toprule
Task & Input Graph & Additional Input & Output \\ 
\midrule
Cycle Detection & Undirected & None & Binary label for cycle presence \\ 
Connectivity & Undirected & Two target nodes & Binary label for path existence \\ 
Bipartite Graph & Undirected & None & Binary label for bipartite structure \\ 
Topological Sort & Directed acyclic & None & Node labels for the ordering of nodes \\ 
Shortest Path & Undirected, weighted & Source and target nodes & List of node labels for the shortest path \\ 
Maximum Triangle Sum & Undirected, weighted & None & Scalar for maximum triangle sum \\ 
Maximum Flow & Directed, weighted & Source and sink nodes & Scalar for maximum flow capacity \\ 
Hamilton Path & Undirected & None & Binary label for path existence \\ 
Subgraph Matching & Two undirected graphs & None & Binary label for subgraph isomorphism \\ 
\bottomrule
\end{tabular}
}
}
\end{table*}

\subsection{Omitted Results}\label{sec_additional_approximation_results}

\paragraph{First-order approximation.} In Table \ref{table_compare_approximation_error_layerwise}, we measure the first-order approximation error by freezing different numbers of bottom layers of the models. 
Linear approximation consistently yields under 1\% error, for edge transformers \cite{muller2024towards} and MPNN \cite{velivckovic2022clrs}.
The approximation becomes better in higher layers. The RSS falls below 0.06\% after freezing the first three layers.  

\begin{table}[t!]
\centering
\caption{Measuring the error for approximating the algorithmic reasoning task loss, using the first-order Taylor's expansion around the initialization trained on all tasks. The results are averaged over $50$ random subsets of graph algorithmic reasoning tasks. We apply the approximation to parameters after layer $l$ and use MPNN \cite{velivckovic2022clrs} or edge transformer \cite{muller2024towards} as the base model.}\label{table_compare_approximation_error_layerwise}
 \resizebox{1.00\textwidth}{!}
{\begin{tabular}{@{}cccccccccccc@{}}
\toprule
& \multicolumn{3}{c}{RSS: Base model $=$ MPNN} & & \multicolumn{3}{c}{RSS: Base model $=$ Edge Transformer}  \\ \cmidrule(l){1-4}\cmidrule(l){5-8}
    Dist. & {Freeze layer $1$} &  {Freeze layer $1, 2$} & {Freeze layer $1, 2, 3$} & Dist. & {Freeze layer $1$} &  {Freeze layer $1, 2$} & {Freeze layer $1, 2, 3$} \\ \midrule
    2\% & 2.3$_{\pm 0.7} \times 10^{-3}$ & 2.0$_{\pm 0.6} \times 10^{-3}$ & 2.3$_{\pm 0.4} \times 10^{-5}$ & 2\% &  3.9$_{\pm 0.4} \times 10^{-3}$ & 3.0$_{\pm 0.2} \times 10^{-3}$ & 4.1$_{\pm 0.2} \times 10^{-5}$ \\
    4\% & 7.0$_{\pm 0.8} \times 10^{-3}$ & 6.0$_{\pm 1.7} \times 10^{-3}$ & 9.4$_{\pm 1.7} \times 10^{-5}$ & 4\% & 6.5$_{\pm 0.5} \times 10^{-3}$ & 6.4$_{\pm 1.0} \times 10^{-3}$ & 8.6$_{\pm 2.1} \times 10^{-5}$ \\
    6\% & 8.2$_{\pm 1.4} \times 10^{-3}$ & 8.0$_{\pm 1.2} \times 10^{-3}$ & 2.1$_{\pm 0.4} \times 10^{-4}$ & 6\% & 7.7$_{\pm 1.6} \times 10^{-3}$ & 7.4$_{\pm 0.9} \times 10^{-3}$ & 3.4$_{\pm 0.7} \times 10^{-4}$ \\
    8\% & 8.6$_{\pm 1.2} \times 10^{-3}$ & 9.1$_{\pm 1.6} \times 10^{-3}$ & 3.7$_{\pm 0.7} \times 10^{-4}$ & 8\% & 8.4$_{\pm 2.4} \times 10^{-3}$ & 8.5$_{\pm 2.0} \times 10^{-3}$ & 4.5$_{\pm 1.8} \times 10^{-4}$\\
    10\% & 9.2$_{\pm 2.4} \times 10^{-3}$ &  9.4$_{\pm 2.1} \times 10^{-3}$ &  5.8$_{\pm 1.1} \times 10^{-4}$ & 10\% & 9.6$_{\pm 1.1} \times 10^{-3}$ &  9.0$_{\pm 0.8} \times 10^{-3}$ &  6.7$_{\pm 2.2} \times 10^{-4}$ \\
\bottomrule
\end{tabular}}
\end{table}

\paragraph{Depth-first search.} To understand the limitations of graph neural networks on Depth-First Search (DFS), we conduct an empirical case study using star graphs. We hypothesize that standard GNNs would fail to learn the DFS task on star graphs, because the algorithm requires predicting distinct node labels, whereas standard message-passing aggregations cannot distinguish between leaf nodes. To test this, we train a GIN model with the max aggregation operation and evaluate the final-step node prediction accuracy on star graphs (a central node connected to leaf nodes). We vary the number of nodes between 10, 20, 50, and 100, and the number of layers between 1, 2, and 3. We use 2,000 graphs in the training dataset.

To test our hypothesis, we evaluate two types of input node-level features: one where all nodes are initialized with identical input features, and another where node indices are included in the input features. As shown in Table \ref{tab_dfs_ablation}, when all nodes are provided with identical features, the GNN model cannot fully predict the node labels. When the input features encode node indices, a 1-layer model achieves near 100\% test accuracy. This is because executing DFS on the star graphs requires identifying the center node (which the model infers from its high degree) and subsequently visiting the leaf nodes in increasing index order. Including the node indices in the input features enables the model to learn this ordering.

Furthermore, performance degrades as the number of layers increases. While the 1-layer model achieves near 100\% accuracy, the 2-layer and 3-layer models perform progressively worse. This is because two layers aggregate features across the entire star graph and leave the model unable to accurately differentiate the leaf node labels.

\begin{table}[htbp]
\centering
\caption{Test accuracy (\%) evaluated at the final step of the DFS task on star graphs. We train a GIN model using varying graph sizes and number of layers, comparing node input features with and without node indices. $l$ denotes the number of layers, and $n$ denotes the number of nodes. We report the average accuracy and the standard deviations across three random seeds.}
\label{tab_dfs_ablation}
{\small\begin{tabular}{llccccc}
\toprule
 &  & {$n=10$} & {$n=20$} & {$n=50$} & {$n=100$} \\
\midrule
\multirow{3}{*}{\shortstack[l]{Using input features\\including node indices}} 
 & $l=1$ & 100.0 $\pm$ 0.0 & 98.8 $\pm$ 1.1 & 98.1 $\pm$ 2.2 & 95.9 $\pm$ 2.1 \\
 & $l=2$ & 100.0 $\pm$ 0.0 & 66.1 $\pm$ 1.5 & 61.0 $\pm$ 2.0 & 57.0 $\pm$ 2.0 \\
 & $l=3$ & 100.0 $\pm$ 0.0 & 64.7 $\pm$ 1.0 & 68.7 $\pm$ 1.1 & 67.4 $\pm$ 0.9 \\
\midrule
\multirow{3}{*}{\shortstack[l]{Using input features\\without node indices}} 
 & $l=1$ & 64.3 $\pm$ 0.0 & 58.0 $\pm$ 0.0 & 53.5 $\pm$ 0.0 & 52.9 $\pm$ 0.0 \\
 & $l=2$ & 64.3 $\pm$ 0.0 & 58.0 $\pm$ 0.0 & 53.5 $\pm$ 0.0 & 52.9 $\pm$ 0.0 \\
 & $l=3$ & 64.3 $\pm$ 0.0 & 58.0 $\pm$ 0.0 & 53.5 $\pm$ 0.0 & 52.9 $\pm$ 0.0 \\
\bottomrule
\end{tabular}}%
\end{table}

\paragraph{Additional evaluations on CLRS-Text.}
Next, we conduct an empirical evaluation on graph algorithmic reasoning datasets, as studied in the CLRS benchmark \cite{velivckovic2022clrs,markeeva2024clrs}. We use twelve text-based graph algorithmic datasets, including the Bellman-Ford algorithm, BFS, and DFS. 
The input sample is encoded as a flattened adjacency matrix of the graph, and intermediate steps are encoded as lists of node labels. 
We fine-tune the pretrained Qwen-3-1.7B model on graphs of 10 nodes, using LoRA \cite{hu2021lora} as the base fine-tuning method.
We evaluate the error rates as one minus the average exact match across output text sequences.

First, we find that the sample complexity of generating intermediate steps can be higher than predicting only the final step. We compare (A1) fine-tuning with intermediate steps and (A2) fine-tuning with only the final step, in terms of the errors of the final step. 
We observe that under low-sample regimes (e.g., within 1,000 samples for the Bellman-Ford algorithm), A1 performs worse than A2. However, with sufficient data samples, A1 ultimately outperforms A2 in all algorithms. 

Second, we find that training on certain combinations of datasets can either positively or negatively affect the average sample complexity. 
Consider the example shown in Figure \ref{fig_task_examples}. The Bellman-Ford algorithm shares all steps with BFS on the graph, whereas DFS shares only the first step with BFS. One might expect that combining the datasets of Bellman-Ford and BFS would improve their average sample complexity. 
We fine-tune a model on each pair of datasets (B1) and compare it with training on each dataset alone (B2). 
We find that for Bellman-Ford and BFS, B1 yields $4.8\%$ lower error rates than B2 on average across multiple sample sizes. On the contrary, for Bellman-Ford and DFS, B2 yields $12.1\%$ higher error rates on average.  
In all $\frac{12 \times 11}{2} = 66$ cases, we find 34 cases where B1 improves sample complexity relative to B2.

\end{document}